\def\RSthmtxt{theorem~}\newref{thm}{name = \RSthmtxt}}
\def\RSlemtxt{lemma~}\newref{lem}{name = \RSlemtxt}}
\theoremstyle{plain}
\newtheorem{thm}{\protect\theoremname}
\theoremstyle{definition}
\newtheorem{defn}[thm]{\protect\definitionname}
\theoremstyle{definition}
\newtheorem{example}[thm]{\protect\examplename}
\theoremstyle{plain}
\newtheorem{lem}[thm]{\protect\lemmaname}
\theoremstyle{plain}
\newtheorem{prop}[thm]{\protect\propositionname}
\theoremstyle{remark}
\newtheorem{rem}[thm]{\protect\remarkname}
\newcommand{\les}{\lesssim}
\newcommand{\less}{<} % strict
\newcommand{\rleq}{\leq}
\newcommand{\TODO}[1]{{\color{blue}\textbf{TODO}: #1}}
\newcommand{\trajectories}{realizations\xspace}
\newcommand{\posReals}{\mathbb{R}_{+}}
\newcommand{\xxx}{{\color{red}xxx}\xspace}
\newcommand{\exaref}[1]{Example~\ref{exa:#1}}
\renewcommand{\figref}[1]{Fig.~\ref{fig:#1}}
\newcommand{\mythanks}{\thanks{The authors are with nuTonomy, an Aptiv company (Boston, MA, Zurich, and Singapore). Please address correspondence to andrea@nutonomy.com.}}
\setlist[enumerate]{itemsep=0mm,leftmargin=4mm,topsep=0mm}
\providecommand{\definitionname}{Definition}
\providecommand{\examplename}{Example}
\providecommand{\lemmaname}{Lemma}
\providecommand{\propositionname}{Proposition}
\providecommand{\remarkname}{Remark}
\providecommand{\theoremname}{Theorem}
\begin{document}

\title{Liability, Ethics, and Culture-Aware Behavior Specification using
Rulebooks}

\author{Andrea Censi, Konstantin Slutsky, Tichakorn Wongpiromsarn, \\
Dmitry Yershov, Scott Pendleton, James Fu, Emilio Frazzoli\mythanks}
\maketitle
\begin{abstract}
The behavior of self-driving cars must be compatible with an enormous
set of conflicting and ambiguous objectives, from law, from ethics,
from the local culture, and so on. This paper describes a new way
to conveniently define the desired behavior for autonomous agents,
which we use on the self-driving cars developed at nuTonomy, an Aptiv
company.

We define a ``rulebook'' as a pre-ordered set of ``rules'', each
akin to a violation metric on the possible outcomes (``realizations'').
The rules are ordered by priority. The semantics of a rulebook imposes
a pre-order on the set of realizations. We study the compositional
properties of the rulebooks, and we derive which operations we can
allow on the rulebooks to preserve previously-introduced constraints. 

While we demonstrate the application of these techniques in the self-driving
domain, the methods are domain-independent.
\end{abstract}

\section{Introduction}

One of the challenges in developing self-driving cars is simply \emph{defining}
\emph{what the car is supposed to do}. The behavior specification
for a self-driving car comes from numerous sources, including not
only the vaguely specified ``rules of the road'', but also implementation
limitations (for example, the speed might be limited due to the available
computation for perception), and numerous other soft constraints,
such as the need of ``appearing natural'', or to be compatible with
the local driving culture (any reader who never lived in Boston will
be surprised to discover what is the ``Massachusetts left''). As
self-driving cars are potentially life-endangering, also moral and
ethical factors play a role~\cite{Arkin2008,Thornton17,students18}.
For a self-driving car, the ``trolley problems''~\cite{Foot1967-FOOTPO-2,Thomson1985,moralmachine}
are not idle philosophical speculations, but something to solve in
a split second. As of now, there does not exist a formalism that allows
to incorporate all these factors in one specification, which can be
precise enough to be taken as regulation for what self-driving cars
designers must implement. 

Formal methods have been applied to specify and verify properties
of complex systems. The main focus has been to provide a proof that
the system satisfies a given specification, expressed in a formal
language. In particular, specifications written in temporal logics
have been studied extensively~\cite{Fainekos05,Kloetzer08,Tabuada04lineartime,karaman08cdc,LOTM13,Webster:2011:FMC:2041619.2041644}.
In self-driving cars, often times, not all the rules can be satisfied
simultaneously. Although there are formalisms that allow specifying
the degree of satisfaction of each rule, e.g., based on fuzzy logic
or some measurable probability~\cite{MorseAELR17,Cizelj13}, as of
now, there does not exist a formalism that allows to incorporate different
factors needed to be considered for self-driving cars with a precise
hierarchy in one specification. 

\begin{figure}
\begin{centering}
\subfloat[Autonomy is about making\newline\hspace{\linewidth}\rule{3mm}{0pt}
the right choices\emph{.}]{\begin{centering}
\includegraphics[height=2.2cm]{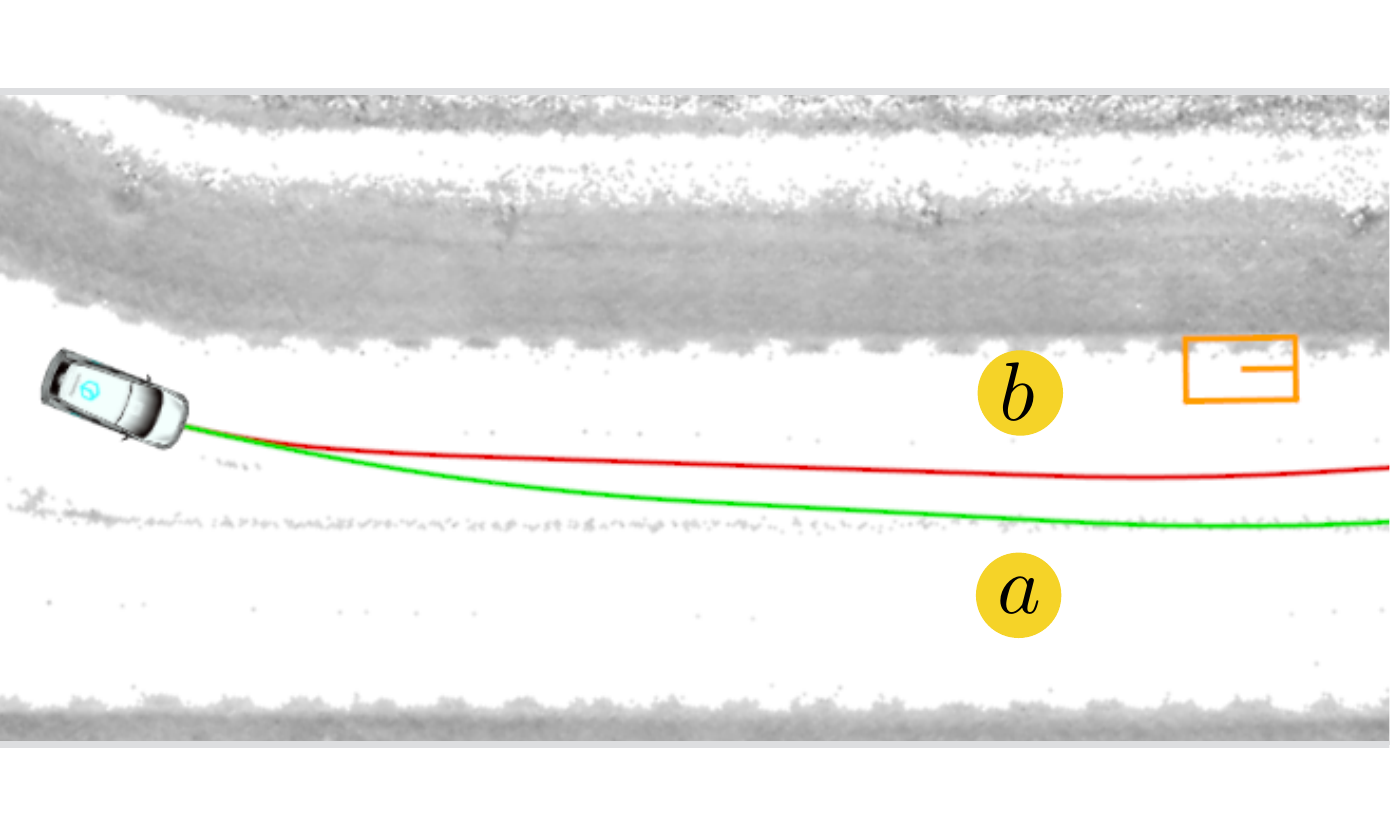}
\par\end{centering}
}\rule{3pt}{0pt}\subfloat[\label{fig:Rulebook-and-induced}Rulebook and induced order\newline\hspace{\linewidth}\rule{3mm}{0pt}
on realizations (outcomes).]{\begin{centering}
\includegraphics[height=2.3cm]{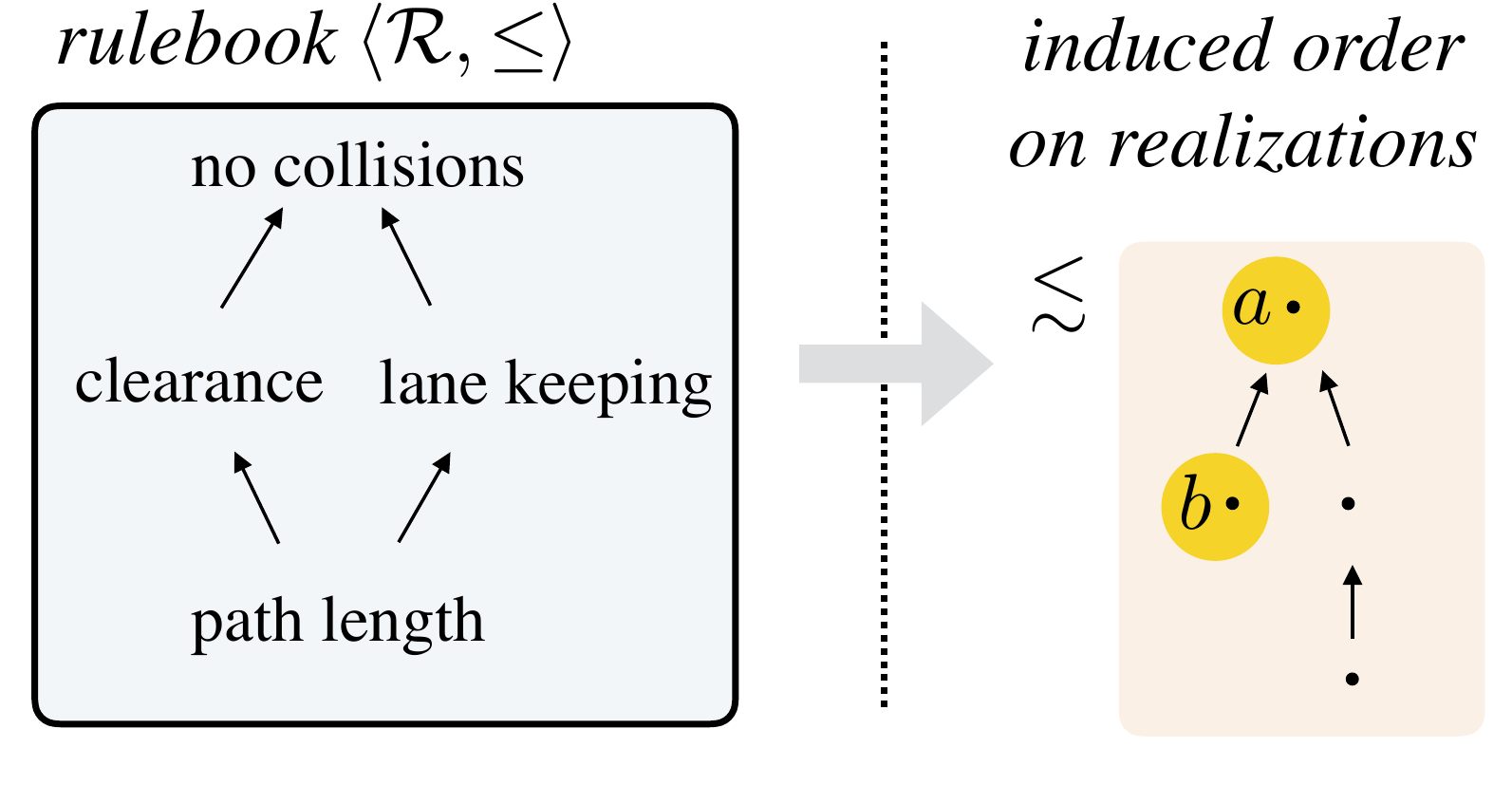}
\par\end{centering}
}
\par\end{centering}
\subfloat[\label{fig:Iterative-specification-refinmen}Rulebook manipulation
operations refine the specification]{\begin{centering}
\includegraphics[width=1\columnwidth]{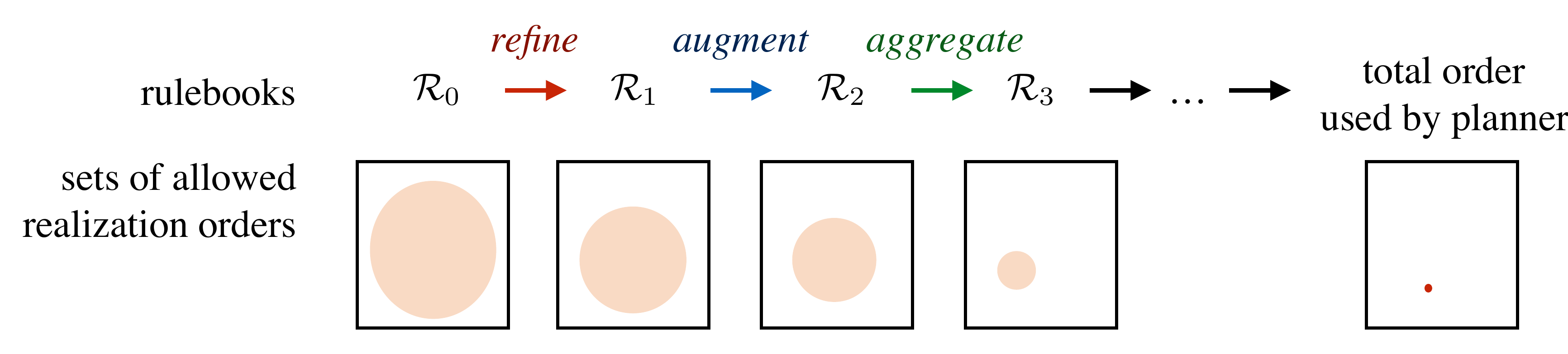}
\par\end{centering}
}\caption{The rulebooks formalism allows to specify the desired behavior for
an autonomous agent by using a pre-ordered set of rules that induce
a pre-order on the allowed outcomes. The rulebooks can be refined
by a series of manipulation operations.}
\end{figure}

\begin{comment}
\begin{itemize}
\item S. M. Thornton, S. Pan, S. M. Erlien, and J. C. Gerdes, \textquotedblleft Incorporating
Ethical Considerations Into Automated Vehicle Control,\textquotedblright{}
IEEE Transactions on Intelligent Transportation Systems, vol. 18,
no. 6, pp.
\item Bjørgen, Edvard P, Simen Madsen, Therese S Bjørknes, Fredrik V Heimsæter,
H Robin, Morten Linderud, Per-niklas Longberg, Louise A Dennis, and
Marija Slavkovik. \textquotedblleft Cake , Death , and Trolleys :
Dilemmas as Benchmarks of Ethical Decision-Making {*},\textquotedblright{} 
\end{itemize}
\end{comment}

\begin{comment}
\begin{itemize}
\item For what regards behavior, we mean:
\item In our experience, specifying the right behavior is as challenging
as implementing the behavior.
\begin{itemize}
\item There many sources:
\item Properties
\begin{itemize}
\item transparence
\item interpretability
\item traceability
\end{itemize}
\end{itemize}
\item Formal methods 
\begin{itemize}
\item How to define behavior is more open. 
\item There are many approaches
\end{itemize}
\item specification using error function
\item ltl 
\item inverse reinforcement learning
\item imitation learning
\item PDDL- symbolic plans
\end{itemize}
\end{comment}

In this paper we describe a formalism called ``rulebooks'', which
we use to specify the desired behavior of the self-driving cars developed
at nuTonomy\footnote{Please note that the functionality described is not necessarily representative
of current and future products by nuTonomy, Aptiv and their partners.
The scenarios discussed are simplified for the purposes of exposition.
The specification examples discussed are illustrative of the philosophy
but not the precise specification we use. The methodology described
does not represent the full development process; in particular we
gloss over the extensive verification and validation processes that
are needed for safety-critical rules.}. While the formalism can be applied to any system, it is particularly
well-suited to handle behavior specification for embodied agents in
an environment where many, possibly conflicting rules need to be honored.
We define a ``rulebook'' as a set of ``rules'' (\prettyref{fig:Rulebook-and-induced}),
each akin to a violation metric on the possible outcomes. The rules
can be defined analytically, using formalisms such as LTL~\cite{doi:10.1146/annurev-control-060117-104838}
or even deontic logic~\cite{Hoven}, or the violation functions can
be learned from data, using inverse reinforcement learning~\cite{2012-cioc},
or any technique that allows to measure deviation from a model. In
the driving domain, the rules can derive from traffic laws, from common
sense, from ethical considerations, etc.

\begin{comment}
\begin{itemize}
\item A. \textquotedbl Deontic logic and computer-supported computer ethics\textquotedbl ,
Jeroen Van Den Hoven and Gert-Jan Lokhorst, 2002 https://www.researchgate.net/publication/229766172\_Deontic\_Logic\_and\_Computer-Supported\_Computer\_Ethics
\end{itemize}
\end{comment}

The rules in a rulebook are hierarchically ordered to describe their
relative priority, but, following the maxim ``\emph{good specifications
specify little}'', the semantics of a rulebook imposes a \emph{pre-order}
on the set of outcomes, which means that the implementations are left
with considerable freedom of action. The rulebooks formalism is ``user-oriented'':
we define a set of intuitive operations that can be used to iteratively
refine the behavior specification. For example, one might define an
``international rulebook'' for rules that are valid everywhere,
and then have region-specific rulebooks for local rules, such as which
side the car should drive on. 

\begin{comment}
\TODO{add related work paragraph here}

What makes the rulebooks approach special compared to other ways to
express a specification is:
\begin{enumerate}
\item Partial specification
\begin{enumerate}
\item In general, rulebooks do not force a \emph{total order} on realizations;
rather, we define a \emph{pre-order.} This means that some pairs of
outcomes are not comparable. 
\end{enumerate}
\item It is a user-oriented modular formalism that scales to complex specifications.
We define operations on rulebooks that allow the user to describe
the specification incrementally. 
\end{enumerate}
\end{comment}

While the rulebooks offer formidable generality in describing behavior,
at the same time, when coupled with graph-based motion planning, the
rulebooks allow a \emph{systematic, simple, and scalable solution
to planning }for a self-driving car:
\begin{enumerate}
\item Liability-, ethics-, culture-derived constraints are formulated as
rules (preferences over trajectories), either manually or in a data-driven
fashion, together with the rules of the road and the usual geometric
constraints for motion planning.
\item Priorities between conflicting rules are established as a rulebook
(ideally, by nation-wide regulations based on public discourse);
\item Developers customize the behavior by resolving ambiguities in the
rulebook until a total order is obtained;
\item Graph-based motion planning, in particular variations of minimum-violation
motion planning~\cite{6760374,Tumova:2013:LCS:2461328.2461330,Tumova2013,Vasile2017,US9645577B1},
allow to generate the trajectories that maximally respect the rules
in the rulebooks.
\end{enumerate}
In a nutshell, the above is how the nuTonomy cars work. The topic
of \emph{efficiently} planning with rulebooks is beyond the goals
of this paper; here, we focus on the use of the rulebooks as a specification,
treating the planning process as a black box.%
\begin{comment}
\footnote{The last results we published in the open literature about our planning
system are~{[}\xxx{]}. Most of the advances obtained since then
are orthogonal to the topic of this paper.}
\end{comment}
\begin{comment}

\subsubsection*{Outline}

The paper is organized as follows. In Section\,2 we describe the
rulebooks formalism. In Section\,3 we describe some simplified examples
of rulebooks from the self-driving domain. In Section\,4 we study
user-oriented operations on rulebooks. In Section\,5 we show some
results of rulebooks-driven tuning. In Section\,6 we offer some conclusions.
\end{comment}

\section{Rulebooks definition}

\subsubsection{Realizations}

Our goal is to define the desired agent \emph{behavior}. Here, we
use the word ``behavior'' in the sense of Willems~\cite{Polderman:1997:IMS:265883}
(and not in the sense of ``behavior-based robotics''~\cite{Arkin:1998:BR:521898,MataricM08}),
to mean that what we want to prescribe is what we can measure objectively,
that is, the externally observable actions and outcomes in the world,
rather than the internal states of the agent or any implementation
details. %
\begin{comment}
In this section, we are going to define the \emph{behavioral rulebooks},
or simply \emph{rulebooks}. 
\end{comment}
Therefore, we define preference relations on a set of possible outcomes,
which we call the set of \emph{realizations}~$\Xi.$ For a self-driving
car, a realization~$x\in\Xi$ is a world trajectory, which includes
the trajectory of all agents in the environment. 

We use \emph{no} concept of infeasibility. Sometimes the possible
outcomes are all catastrophically bad; yet, an embodied agent must
keep calm and choose the least catastrophic option.

\subsubsection{Rules}

Our ``atom'' of behavioral specification is the ``rule''. In our
approach, a rule is simply a scoring function, or ``violation metric'',
on the realizations. 
\begin{defn}[Rule]
\label{def:rule}Given a set of realizations~$\Xi$, a \emph{rule}
on~$\Xi$ is a function~$r:\Xi\rightarrow\posReals.$
\end{defn}
The function~$r$ measures the degree of violation of its argument.
If~$r(x)<r(y)$, then the realization~$y$ violates the rule~$r$
to a greater extent than does~$x$. In particular, $r(x)=0$ indicates
that a realization~$x$ is fully compliant with the rule. 

Any scalar function will do. The definition of the violation metric
might be analytical, ``from first principles'', or be the result
of a learning process.

\begin{comment}
The preimage~$r^{-1}(0)$ is called the \emph{constraint set} of
the rule, and it consists precisely of those realizations that do
not violate the rule. 
\end{comment}

\begin{comment}
In this paper, what matters is not the value of the violation function
per se, but the ordering that arises from it. For all intents and
purposes, two equiviolational rules, in the sense of the following
definition, are considered the same rule. 
\begin{defn}[Equiviolational rules]
\label{def:equiv-rule}Two rules~$r_{1},r_{2}$ are \emph{equiviolational}
iff~$r_{1}(x)<r_{1}(y)\Leftrightarrow r_{2}(x)<r_{2}(y)$ for all~$x,y$.
\end{defn}
\end{comment}

\begin{comment}
K: Although, the actual violation value does become important when
we aggregate rules. For instance, if two equivalent rules are aggreagted
by averaging their values, rescaling one of the rules will change
its contribuition to the average.
\end{comment}

In general, the rulebooks philosophy is to pay particular attention
about specifying what we ought to do \emph{when the rule has to be
violated}, as described in the following examples.
\begin{example}[Speed limit]
A naïve rule that is meant to capture a speed limit of 45 km/h could
be defined as:
\[
r(x)=\begin{cases}
0, & \text{if the car's speed is always below 45\,km/h,}\\
1, & \text{otherwise.}
\end{cases}
\]
However, this discrete penalty function is not very useful in practice.
The rulebooks philosophy is to assume that rules might need to be
violated for a greater cause. In this case, it is advisable to define
a penalty such as: 
\[
r(x)=\text{interval for which the car was above 45 km/h.}
\]
The effect of this will be that the car will try to stay below the
speed limit, but if it cannot, it will minimize the time spent violating
the limit. Alternatively, one can penalize also the magnitude of the
speed violation:
\[
r'(x)=r(x)\times(v_{\max}-\text{45 km/h}).
\]
\end{example}
\begin{example}[Minimizing harm]
It is easy enough to write a constraint describing the fact that
we do not want any collision; but, assuming that a collision with
a human is unavoidable given the circumstances, what should the car
do? In this case, it would be advisable to define the violation function
as:
\[
r(x)=\text{kinetic energy transferred to human bodies,}
\]
so that the car will try to avoid collisions, but, if a collision
is inevitable, it will try to reduce the speed as much as possible.
\end{example}

\subsubsection{Rulebooks}

A rulebook~$\mathcal{R}$ is a \emph{pre-ordered} set of rules. We
will use~$\mathcal{R}$ both for the rulebook and for its underlying
set of rules.
\begin{defn}[Rulebook]
A \emph{rulebook} is a tuple $\left\langle \mathcal{R},\rleq\right\rangle $,
where~$\mathcal{R}$ is a finite set of rules and~$\rleq$ is a
preorder on~$\mathcal{R}$.
\end{defn}
\begin{figure}[H]
\begin{centering}
\includegraphics[width=4.5cm]{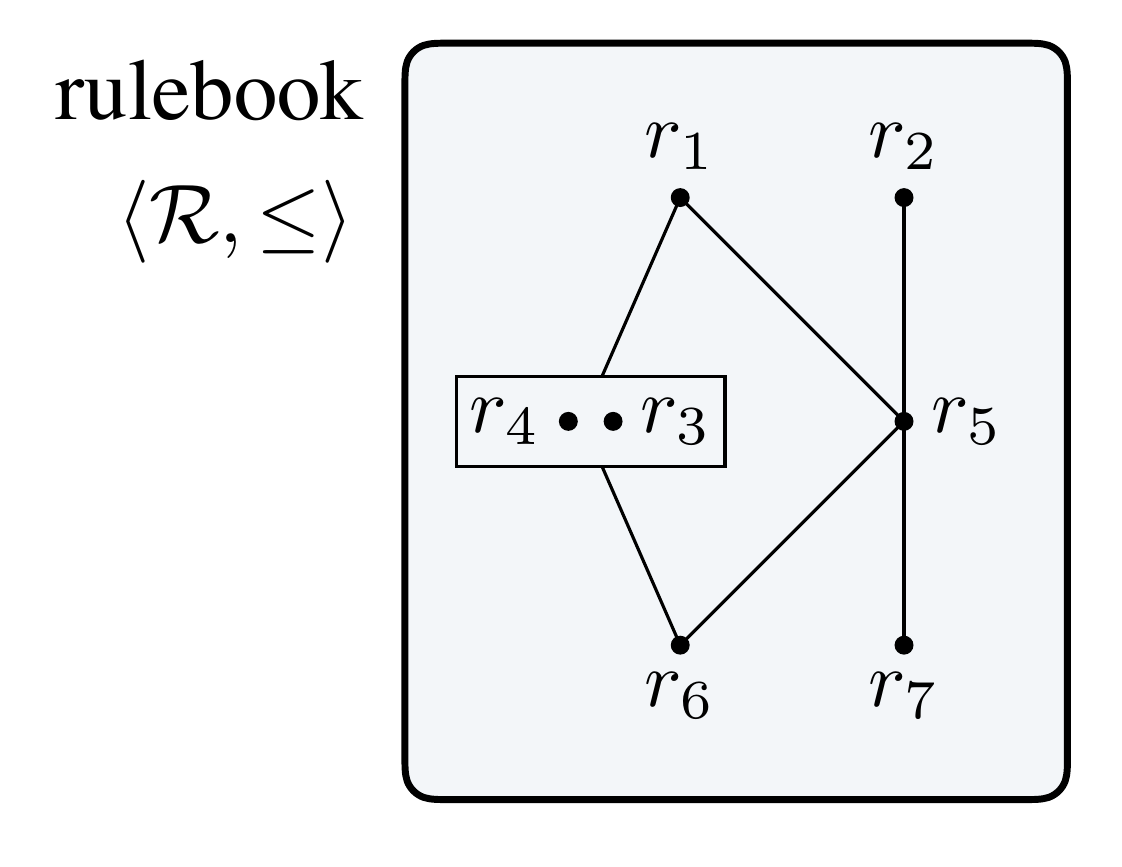}
\par\end{centering}
\caption{\label{fig:rulebook-graph}Graphical representation of a rulebook.
Rules are ordered vertically with the most important rules being at
the top.}
\end{figure}

Being a preorder, any rulebook may be represented as a directed graph,
in which each node is a rule, and an edge between two rules~$r_{1}\rightarrow r_{2}$
means that~$r_{1}\rleq r_{2}$, i.e., the rule~$r_{2}$ has higher
rank. \prettyref{fig:rulebook-graph} gives an example of a rulebook
with $7$ rules. In this example, rules~$r_{1}$ and~$r_{2}$ are
incomparable, but both are greater than $r_{5}$. Rules~$r_{3}$
and~$r_{4}$ are of the same rank, meaning $r_{3}\le r_{4}$ \emph{and}
$r_{4}\le r_{3}$, and both are smaller than~$r_{1}$, greater than~$r_{6}$
and incomparable to~$r_{5}$, $r_{2}$, or $r_{7}$. 

Just like it might be convenient to learn some of the non-safety-critical
rules from data, it is possible to learn some of the priorities from
data as well. (See~\cite{modugno:hal-01273409,DBLP:journals/corr/SilverioCRC17}
for a similar concept in a different context.)

\begin{comment}
\begin{figure}[H]
\caption{ASIMOV}
\end{figure}
\end{comment}

\subsubsection{Induced pre-order on realizations}

We now formally define the semantics of a rulebook as specifying a
pre-order on realizations. Because a rulebook is defined as a \emph{pre-ordered}
set of rules, not all the relative priorities among different rules
are specified. We will see that this means that a rulebook can be
used as a very flexible \emph{partial} specification.

Given a rulebook $\left\langle \mathcal{R},\rleq\right\rangle $,
our intention is to preorder all realizations such that~$x\les y$
can be interpreted as $x$ being ``at least as good as''~$y$,
i.e., the degree of violation of the rules by~$x$ is at most as
much as that of~$y$.
\begin{defn}[Pre-order $\les$ and strict version~$\less\:$]
\label{def:preorder-on-trajectories} Given a rulebook~$\left\langle \mathcal{R},\rleq\right\rangle $
and two realizations $x,y\in\Xi$, we say that $x\les y$ if for any
rule~$r\in\mathcal{R}$ satisfying~$r(y)<r(x)$ there exists a rule
$r'>r$ such that~$r'(x)<r'(y)$. We denote by~$\less\:$ the strict
version of~$\les$.
\end{defn}
\begin{comment}
The intuitive interpretation of the relation $\les$ is fairly simple:
$x\les y$ means that if any rule $r\in\mathcal{R}$ prefers the realization~$y$,
then its choice is being \textquotedbl overridden\textquotedbl{}
by some rule $r'>r$ of higher importance. 
\end{comment}

\begin{lem}
\label{lem:transitivity} Let $\left\langle \mathcal{R},\rleq\right\rangle $
be a rulebook, let $x,y,z\in\Xi$ be realizations such that $x\les y$,
$y\les z$, and let $r\in\mathcal{R}$ be a rule. If either $r(x)\ne r(y)$
or $r(y)\ne r(z)$, then there exists a rule $r'\ge r$ such that
$r'(x)<r'(z)$. 
\end{lem}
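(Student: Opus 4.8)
The plan is to prove the statement by induction on $h(r) := |\{s\in\mathcal{R} : s > r\}|$, the number of rules ranked strictly above $r$; this is a finite natural number since $\mathcal{R}$ is finite, so ordinary strong induction applies. Before the case analysis I would record the one structural fact that makes the induction go through: if $s > r$ then $h(s) < h(r)$. Indeed, transitivity of $\rleq$ gives $\{t : t > s\}\subseteq\{t : t > r\}$, and $s$ itself belongs to the right-hand set but not the left-hand one, so the inclusion is strict. I would also note that from $s > r$ and any $r'\ge s$ one gets $r'\ge r$, again by transitivity of $\rleq$; this is how the rule produced by the induction hypothesis will be lifted back above $r$.

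For the inductive step, fix $r$ and $x\les y\les z$ with $r(x)\neq r(y)$ or $r(y)\neq r(z)$, and run a trichotomy on the three values $r(x),r(y),r(z)$. If $r(x) < r(z)$, take $r' = r$. Otherwise $r(x)\ge r(z)$, and I split on $r(y)$. If $r(y) < r(x)$, then $x\les y$ yields some $s > r$ with $s(x) < s(y)$; since then $s(x)\neq s(y)$ and $h(s) < h(r)$, the induction hypothesis applied to $s$ and the same triple $x\les y\les z$ gives $r'\ge s$ with $r'(x) < r'(z)$, hence $r'\ge r$. If instead $r(y)\ge r(x)\ (\ge r(z))$, then either $r(y) > r(z)$ --- in which case $y\les z$ yields some $s > r$ with $s(y) < s(z)$, and, $s(y)\neq s(z)$ and $h(s)<h(r)$, we conclude exactly as before via the induction hypothesis --- or $r(y) = r(z)$, which together with $r(z)\le r(x)\le r(y)$ forces $r(x) = r(y) = r(z)$ and contradicts the hypothesis on $r$. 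These exhaust all cases. The base case $h(r) = 0$ needs no separate treatment: a maximal $r$ admits no $s > r$, so the two branches that invoke the $\les$-relations cannot fire, leaving only $r(x) < r(z)$, which is handled directly.

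The routine part is the trichotomy itself; the step I would be most careful about is the bookkeeping around the induction hypothesis --- checking that each invocation is on a \emph{strictly} higher rule (so that $h$ genuinely decreases) and that the rule $r'$ it returns really satisfies $r'\ge r$. Both points reduce to the two transitivity observations isolated in the first paragraph, so I would state those cleanly up front; after that the argument is mechanical.
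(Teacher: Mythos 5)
Your proof is correct and is essentially the paper's own argument: the paper builds the alternating chain $r \le r_0 < r_1 < r_2 < \cdots$ explicitly, bouncing between $x \les y$ and $y \les z$ and invoking finiteness of $\mathcal{R}$ to terminate, while you package the exact same climb as a strong induction on $h(r) = |\{s \in \mathcal{R} : s > r\}|$. Your version makes the paper's ``continuing in the same fashion, the chain has to stop'' step fully rigorous, and the two transitivity facts you isolate (that $s > r$ forces $h(s) < h(r)$, and that $r' \ge s > r$ gives $r' \ge r$) are exactly what is needed.
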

\begin{IEEEproof}
We give a proof for $r(x)\ne r(y)$; the case $r(y)\ne r(z)$ is analogous.
If $r(x)>r(y)$, then $x\les y$ guarantees existence of $r_{0}>r$
such that $r_{0}(x)<r_{0}(y)$. If $r(x)<r(y)$ to begin with, then
we may set $r_{0}=r$, and in either case we get $r_{0}\ge r$ such
that $r_{0}(x)<r_{0}(y)$. We are done if $r_{0}(y)\le r_{0}(z)$,
as one can take $r'=r_{0}$. Otherwise, $y\les z$ implies existence
of $r_{1}>r_{0}$ such that $r_{1}(y)<r_{1}(z)$. Again, we are done
if $r_{1}(x)\le r_{1}(y)$, and if not, there has to be some rule
$r_{2}>r_{1}$ such that $r_{2}(x)<r_{2}(y)$. Continuing in the same
fashion, one builds an increasing chain $r\le r_{0}<r_{1}<r_{2}<\cdots.$
Since $\mathcal{R}$ is assumed to be finite, the chain has to stop,
which is possible only if $r_{n}(x)<r_{n}(z)$ for some $n$. 
\end{IEEEproof}
\begin{prop}
Let $\left\langle \mathcal{R},\rleq\right\rangle $ be a rulebook,
and let $x,y,z\in\Xi$ be realizations. 
\begin{enumerate}
\item The relation $\les$ on realizations is a preorder. 
\item Two realizations $x$ and $y$ are equivalent if and only if $r(x)=r(y)$
for all rules $r\in\mathcal{R}$.
\end{enumerate}
\end{prop}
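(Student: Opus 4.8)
The plan is to unpack the definition of a preorder and check each clause, using Lemma~\ref{lem:transitivity} for the two steps that are not purely formal. Part~(1) asks for reflexivity and transitivity of $\les$; part~(2) is then an equivalence, and one of its directions will again be a one-line consequence of the lemma, while the other is immediate.

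Reflexivity is immediate from Definition~\ref{def:preorder-on-trajectories}: for a single realization $x$ there is no rule $r$ with $r(x) < r(x)$, so the (universally quantified) condition defining $x \les x$ holds vacuously. The same remark settles the ``if'' direction of part~(2): if $r(x) = r(y)$ for every rule $r$, then no rule ever satisfies $r(y) < r(x)$ or $r(x) < r(y)$, so $x \les y$ and $y \les x$ both hold trivially, i.e.\ $x$ and $y$ are equivalent.

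For transitivity I would assume $x \les y$ and $y \les z$, pick an arbitrary rule $r$ with $r(z) < r(x)$, and aim to produce a rule $r' > r$ with $r'(x) < r'(z)$. Since $r(z) < r(x)$ forces $r(x) \ne r(z)$, we cannot have $r(x) = r(y) = r(z)$, so at least one of $r(x)\ne r(y)$, $r(y)\ne r(z)$ holds and Lemma~\ref{lem:transitivity} supplies a rule $r' \ge r$ with $r'(x) < r'(z)$. It remains to observe that $r' \ne r$ --- otherwise $r'(x) < r'(z)$ would read $r(x) < r(z)$, contradicting $r(z) < r(x)$ --- so in fact $r' > r$, and $x \les z$ follows; hence $\les$ is a preorder.

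The ``only if'' direction of part~(2) is where the lemma does the real work, though it too is short: assuming $x \les y$ and $y \les x$, suppose for contradiction that $r(x) \ne r(y)$ for some rule $r$. Instantiating Lemma~\ref{lem:transitivity} with $z := x$ --- its hypotheses $x \les y$ and $y \les z$ are exactly $x \les y$ and $y \les x$, and the disjunction $r(x)\ne r(y)$ or $r(y)\ne r(z)$ is satisfied --- yields a rule $r' \ge r$ with $r'(x) < r'(x)$, which is impossible. Hence $r(x) = r(y)$ for all $r$. I do not anticipate a genuine obstacle here; the only points deserving care are the vacuous-quantifier observations and the two small tricks, namely upgrading $r'\ge r$ to $r'>r$ in the transitivity step and setting $z=x$ to collapse the equivalence case onto the lemma.
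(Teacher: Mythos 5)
Your proof is correct and follows essentially the same route as the paper's: reflexivity and the ``if'' direction of (2) are vacuous, transitivity follows from Lemma~\ref{lem:transitivity} applied to a rule with $r(z)<r(x)$, and the ``only if'' direction of (2) applies the same lemma with $z:=x$ to derive the absurdity $r'(x)<r'(x)$. Your explicit upgrade from $r'\ge r$ to $r'>r$ in the transitivity step is a detail the paper leaves implicit, but it is the same argument.
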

\begin{IEEEproof}
1) It is clear that $\les$ is reflexive, so we only need to check
transitivity. Suppose $x\les y$, $y\les z$, and let $r\in\mathcal{R}$
be such that $r(x)>r(z)$. Clearly either $r(y)\ne r(x)$ or $r(z)\ne r(y)$,
so Lemma \ref{lem:transitivity} applies, producing $r'>r$ such that
$r'(x)<r'(z)$, hence $x\les z$ as claimed. 

2) Suppose towards a contradiction there are some realizations satisfying
$x\les y$ and $y\les x$, yet $r(x)\ne r(y)$ for some $r\in\mathcal{R}$.
Without loss of generality, let us assume that $r(x)<r(y)$. Since
we have $x\les y\les x$, Lemma~\ref{lem:transitivity} produces
some $r'\ge r$ such that $r'(x)<r'(x)$, which is absurd. The other
direction ($\forall r,r(x)=r(y)\implies x\sim y)$ is obvious.
\end{IEEEproof}
\begin{rem}
In the special case in which the rulebook is a linear order, the induced
order on realizations is the \emph{lexicographic order} used in the
literature in minimum-violation planning.
\end{rem}

\section{Examples in the driving domain\label{sec:hierarchy}}

In this section, we give a few examples of the types of rules that
are useful in the driving domain. Rather than describing the full
complexity of our production rules, which address subtle nuances of
behavior and idiosyncrasies and corner cases of traffic laws, we prefer
to give a few synthetic examples of rulebooks and rulebooks refinement.

\begin{comment}
A rulebook effectively has two mechanisms for prioritizing trajectories.
One is through specifying a hierarchy between rules, and the other
one is the cost function, which combines rule violations for a group
of equivalent rules. In this section we focus on the first mechanism,
and show several examples of rules, which admit natural prioritization.
\end{comment}

\begin{example}[Safety vs. infractions]
Consider the scenario in~\prettyref{fig:collision}. A vehicle is
faced with an obstacle in front, and is given a choice between two
trajectories~$a$ and~$b$. Suppose the initial speed of the vehicle
is sufficiently high, and there is no time to stop, so collision is
unavoidable if $a$ is chosen. Trajectory~$b$, however, is collision
free, but it violates a different rule, since it intersects a double
solid line. 

The rulebooks take on this situation is the following. A rule ``not
to collide with other objects'' will have a higher priority than
the rule of not crossing the double line~(\prettyref{fig:no-coll}).
With this rulebook, the trajectory~$b$ will be chosen to avoid the
collision.
\end{example}
\begin{figure}[h]
\centering{}\subfloat[]{\begin{centering}
\includegraphics[width=0.6\columnwidth]{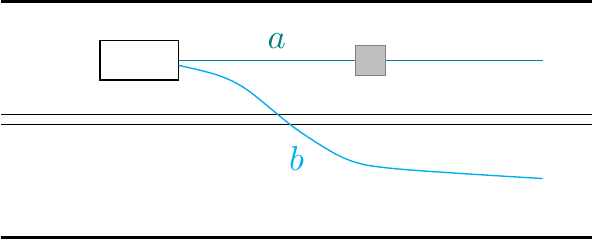}
\par\end{centering}
} \subfloat[\label{fig:no-coll}]{\begin{centering}
\includegraphics[width=0.3\columnwidth]{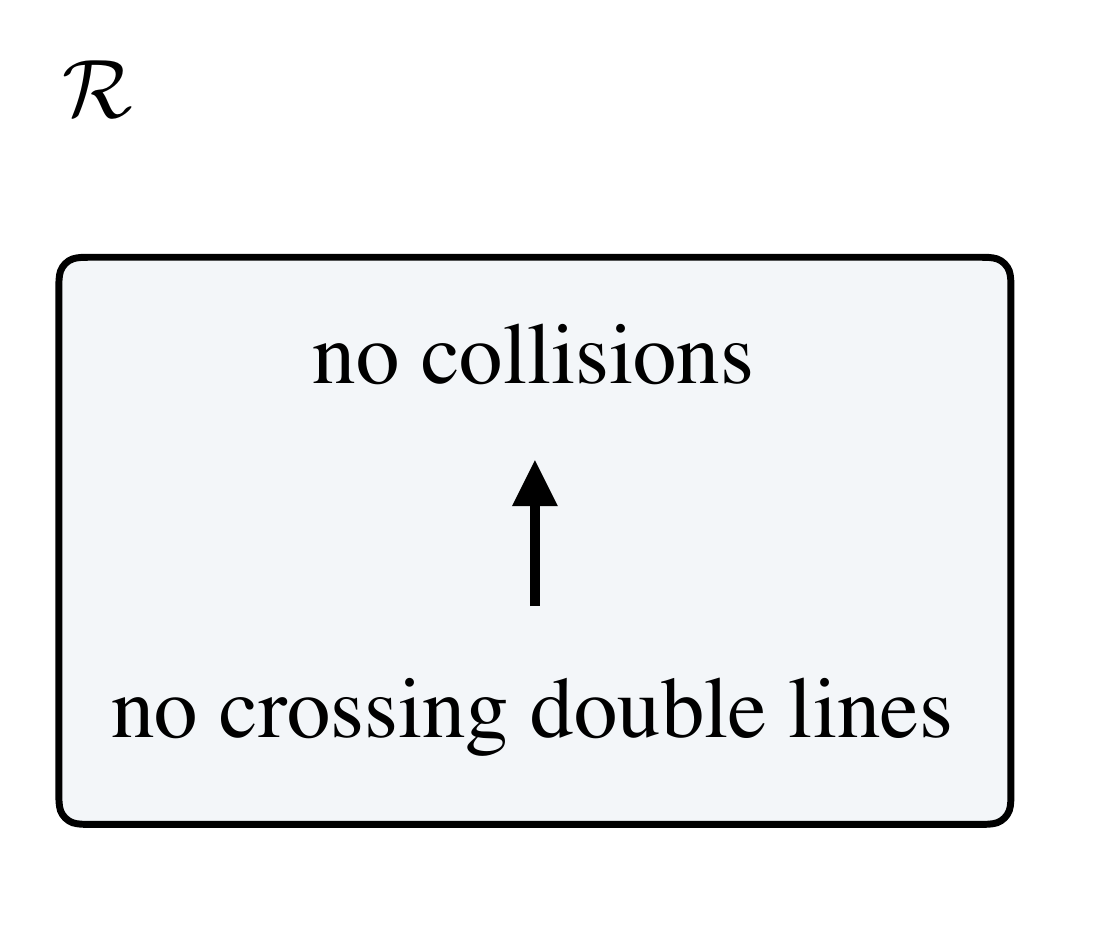}
\par\end{centering}
}\caption{\label{fig:collision}The rulebook allows the agent to cross the double
white line to avoid a collision. (This assumes that there are no other
agents outside the frame that might trigger the ``no-collision''
rule.)}
\end{figure}

\begin{figure}[h]
\centering{}\subfloat[]{\begin{centering}
\includegraphics[width=0.6\columnwidth]{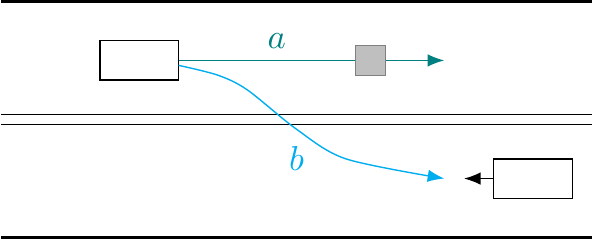}
\par\end{centering}
} \subfloat[\label{fig:collision-choice-rb}]{\begin{centering}
\includegraphics[width=0.3\columnwidth]{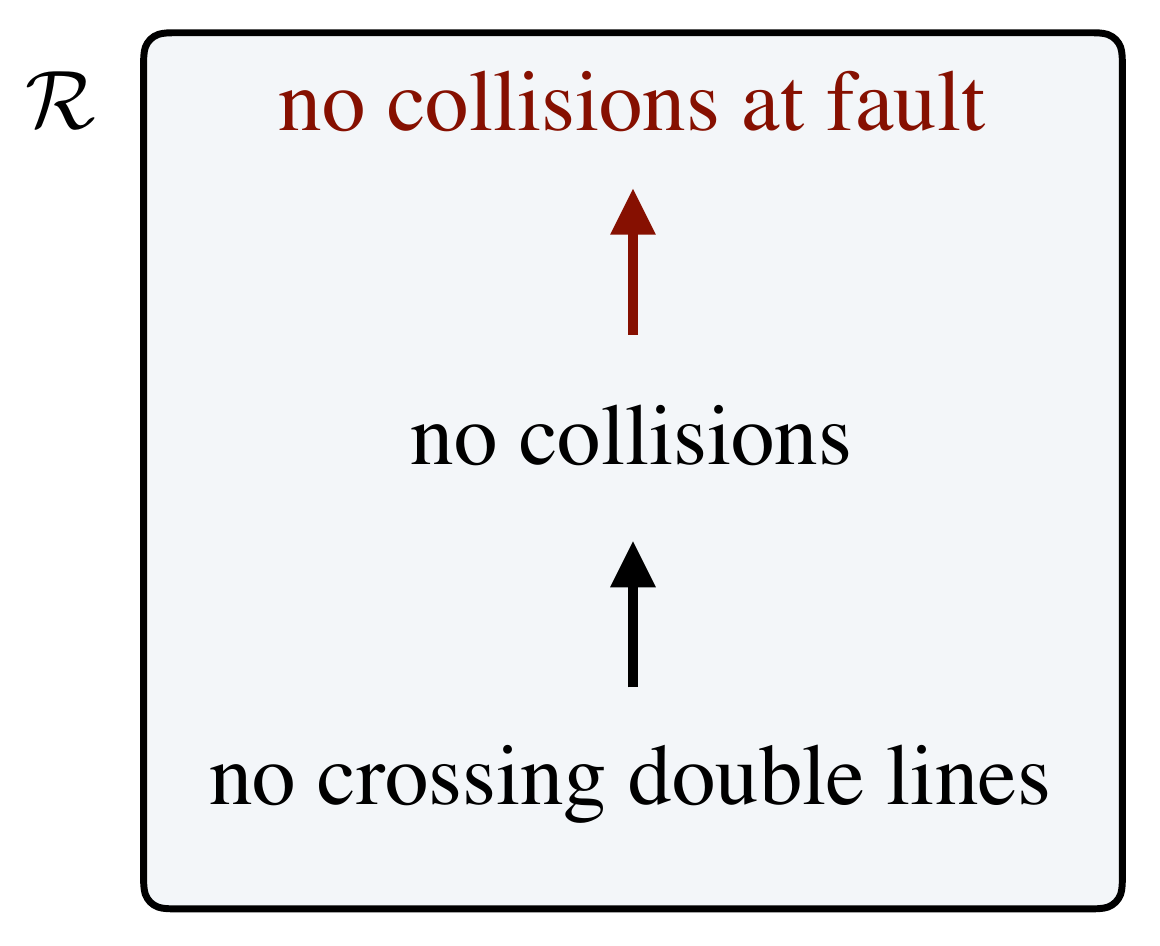}
\par\end{centering}
}\caption{\label{fig:collision-choice}The rulebook instructs the agent to collide
with the object on its lane, rather than provoking an accident, for
which it would be at fault.}
\end{figure}

\begin{example}[Liability-aware specification]
\label{exa:liability}Let's change the situation slightly by assuming
that trajectory $b$ is also in collision, but with a different agent
\textemdash{} an oncoming vehicle on the opposite lane. %
\begin{comment}
The question of which trajectory should be chosen is hard, but fortunately,
it is not the question that we are interested in at the moment. Recall
that our purpose is to explain how rulebooks formalism can be used
to encode the desired behavior. So, suppose to the best of our knowledge,
both collisions are equally dangerous, and we don't have any argument
to resolve the choice based on moral grounds.
\end{comment}
Under these assumptions, we may be interested in choosing the outcome
where the ego vehicle is \textit{not} at fault for the collision.

This behavior specification can be achieved by the rulebook of~\prettyref{fig:collision-choice-rb},
having two collision rules, where one evaluates the degree of collision,
where the ego vehicle is at fault, and the other evaluates collisions
caused by third-party, which is below the former in the rulebooks
hierarchy. This will force the ego vehicle to prefer trajectory $a$
over $b$.

This example fully captures the concept of the ``responsibility-sensitive
safety'' model described in~\cite{RSS17}.
\end{example}
\begin{example}[Partial priorities specification]
\label{exa:partial}Consider the scenario depicted in~\prettyref{fig:avoidance-example},
where the vehicle encounters an obstacle along its route. For simplicity,
we focus on four discrete representative trajectories, called $a,b,c,d$.
A minimal rulebook that allows to deal with this situation would contain
at least four rules, detailed below. For simplicity, we write the
violation metrics as binary variables having value $0$ or $1$ on
the test trajectories, while in practice these would be continuous
functions.%
\begin{comment}
This runs into the risk of continuous cost overoptimization (see a
more detailed explanation for this line in the previous version)
\end{comment}

\noindent 1) Rule $\beta$ - \textbf{Blockage}, attaining value $1$
if the trajectory is blocked by an obstacle, and $0$ otherwise: 
\[
\beta(x)=\begin{cases}
0, & \textrm{for }x=b,c,d;\\
1, & \textrm{for }x=a.
\end{cases}
\]

\noindent 2) Rule $\lambda$ - \textbf{Lane Keeping}, $1$ iff the
trajectory intersects the lane boundary:
\[
\lambda(x)=\begin{cases}
0, & \textrm{for }x=a,b;\\
1, & \textrm{for }x=c,d.
\end{cases}
\]

\noindent 3) Rule $\kappa$ - \textbf{Obstacle clearance}, $1$ iff
the trajectory comes closer to an obstacle than some threshold~$C_{0}$:
\[
\kappa(x)=\begin{cases}
0, & \textrm{for }x=c,d;\\
1, & \textrm{for }x=a,b.
\end{cases}
\]

\begin{rem}[Learning while preserving safety]
\noindent While parameters such as the minimum clearance from an
obstacle~$C_{0}$ can be specified manually, in practice, given an
adequate data analytics infrastructure, they are great candidates
to be \emph{learned} from the data. This allows the car to adapt the
behavior to the local driving culture. By still having the safety-preserving
rules at the top of the hierarchy, the rulebooks allow the system
to be adaptive without ever compromising safety, not even with adversarial
data. (See~\cite{Aswani2013} for a similar principle in a different
context.)
\end{rem}
\noindent 4) Rule $\alpha$ - \textbf{Path length}, whose value is
the length of the trajectory: 
\[
\alpha(a)<\alpha(b)<\alpha(c)<\alpha(d).
\]

Out of these rules, we can make different rulebooks by choosing different
priorities. For example, defining the rulebook~$\mathcal{R}$ with
ordering $\alpha<\kappa<\beta$ and $\alpha<\lambda<\beta$, depicted
in~\prettyref{fig:rulebook-example-order}, the following order on
trajectories is imposed: $b<a$ and $c<d<a$. Note that~$b$ is not
comparable with either~$d$ or~$c$. This is an important feature
of a \emph{partial specification:} we leave freedom to the implementation
to choose the details of the behavior that we do not care about.
\end{example}
\begin{figure}[h]
\begin{centering}
\subfloat[\label{fig:avoidance-example}Trajectories available to a vehicle
before an avoidance maneuver.]{\begin{centering}
\includegraphics[width=0.85\columnwidth]{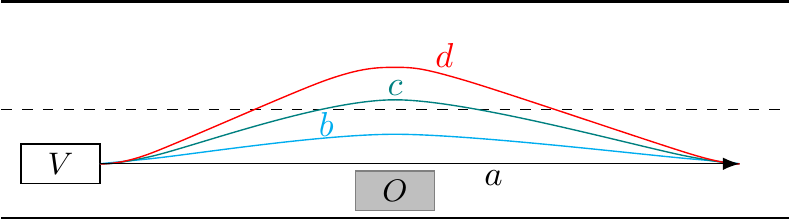} 
\par\end{centering}
\centering{}}
\par\end{centering}
\begin{centering}
\subfloat[\label{fig:rulebook-example-order}Rulebook hierarchy and induced
hierarchy on realizations order.]{\begin{centering}
\includegraphics[width=1\columnwidth]{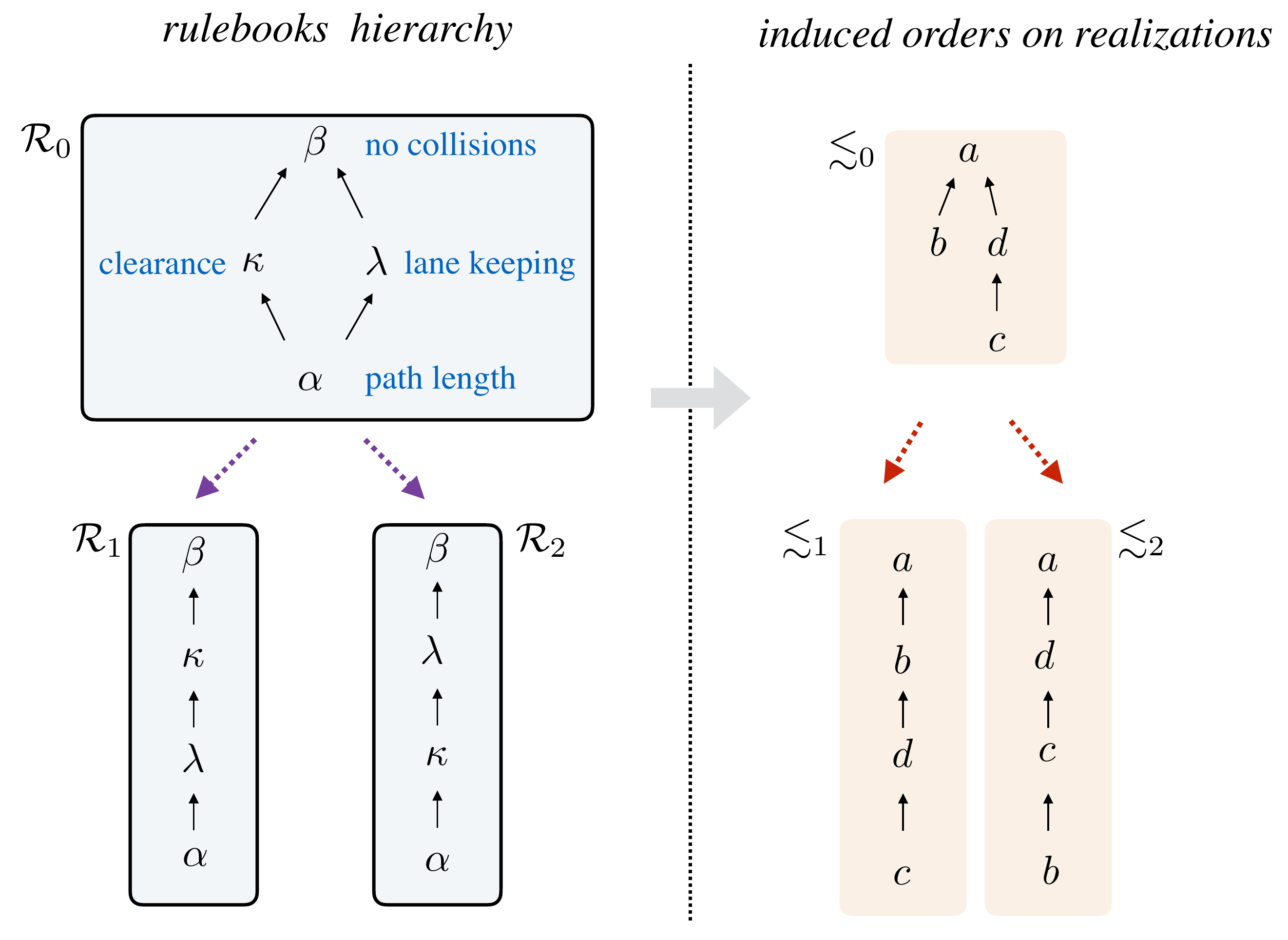}
\par\end{centering}
\centering{}}
\par\end{centering}
\caption{\label{fig:nontrivial}Example involving an avoidance maneuver.}
\end{figure}

\begin{center}
\begin{comment}
\begin{center}
\includegraphics[width=0.22\columnwidth,page=1]{}
\includegraphics[width=0.22\columnwidth,page=2]{tikz/zoom_avoidance_maneuver.pdf}\includegraphics[width=0.22\columnwidth,page=3]{tikz/zoom_avoidance_maneuver.pdf}\includegraphics[width=0.22\columnwidth,page=4]{tikz/zoom_avoidance_maneuver.pdf}
\par\end{center}
\end{comment}
\par\end{center}

\section{Iterative specification refinement with rulebooks manipulation}

\begin{comment}
In the previous section we have shown how the desired behavior of
the system can be iteratively refined by manipulating a rulebook. 
\end{comment}
We formalize this process of \emph{iterative specification refinement~}(\prettyref{fig:Iterative-specification-refinmen})\emph{,}
by which a user can add rules and priority relations until the behavior
of the system is fully specified to one's desire.
\begin{example}
Regulations in different states and countries often share a great
deal of similarity. It would be ineffective to start the construction
of rulebooks from scratch in each case; rather, we wish to be able
to define a \textquotedbl base\textquotedbl{} rulebook that can then
be particularized for a specific legislation by adding rules or priority
relations.
\end{example}
\begin{comment}

\subsection{Relation between rulebooks}

The main challenge ahead is understanding what is preserved or not
during manipulation of the rulebooks, and which manipulations preserve
what. Dear reader: we are going to get technical. But think about
it like this. One day you yourself might be in front of an autonomous
car, and it will matter a great deal to you that the trajectory that
would hit you is scored low.

Here we define two relations between rulebooks, called \emph{strict
refinement} and \emph{loose refinement}. A rulebooks refines another
if its induced pre-order is a refinement of the original one.
\begin{defn}
\label{def:rulebook-refinement-loose}A rulebook $\left\langle \mathcal{R}_{1},\rleq_{1}\right\rangle $
is a\emph{ loose refinement} $\left\langle \mathcal{R}_{2},\rleq_{2}\right\rangle $
if its induced pre-order~$\les_{2}$ refines~$\les_{1}$. 
\end{defn}
It turns out that the order above is very hard to preserve through
manipulation of rulebooks; however, we show that the following notion
of refinement is much more stable.

%
\end{comment}

\subsubsection{Operations that refine rulebooks}

We will consider three operations (\prettyref{fig:manipulation}):
\begin{enumerate}
\item \textbf{Priority refinement} (\prettyref{def:op-refine}): this operation
corresponds to adding another edge to the graph, thus clarifying the
priority relations between two rules. 
\item \textbf{Rule aggregation} (\prettyref{def:op-aggreg}): this operation
allows to ``collapse'' two or more equi-ranked rules into one.
\item \textbf{Rule augmentation }(\prettyref{def:op-augment}):\textbf{
}this operation consists in adding another rule at the lowest level
of priority.
\end{enumerate}
\begin{figure}[H]
\centering{}\includegraphics[width=1\columnwidth]{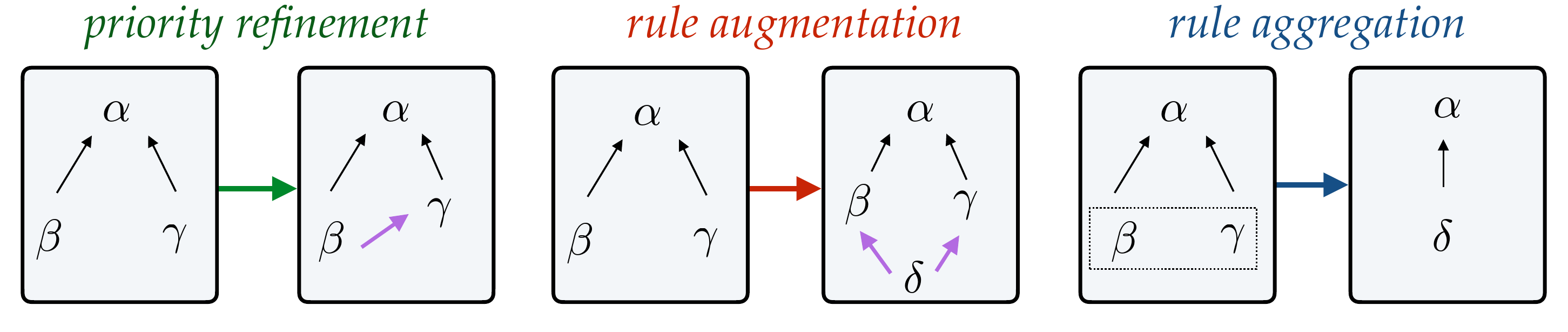}
\caption{\label{fig:manipulation} Three operations for manipulation of rulebooks.}
\end{figure}

\subsubsection{Priority refinement}

\begin{comment}
A rulebook is a flexible specification that leaves substantial freedom
to the implementations. This freedom comes from the absence of a full
specification of the relative rules priorities, because the ordering
is not total.
\end{comment}
The operation of refinement adds priority constraints to the rulebook. 
\begin{defn}
\label{def:op-refine}An allowed \emph{priority refinement} operation
of a rulebook $\left\langle \mathcal{R}_{1},\rleq_{1}\right\rangle $
is a rulebook $\left\langle \mathcal{R}_{1},\rleq_{2}\right\rangle $,
where the order~$\rleq_{2}$ is a refinement of $\rleq_{1}$.
\end{defn}
\begin{example}
Continuing the example in~\prettyref{fig:nontrivial}, we can create
two refinements of the rulebook by adding priority constraints that
resolve the incomparability of rules~$\kappa$ and~$\lambda$ one
way or the other. For example, choosing the totally ordered rulebook
$\alpha\rightarrow\kappa\rightarrow\lambda\rightarrow\beta$, the
order on trajectories is $b<c<d<a$, while for the rulebook $\alpha\rightarrow\lambda\rightarrow\kappa\rightarrow\beta$,
the order is $c<d<b<a$.
\end{example}

\subsubsection{Rule aggregation}

Suppose that a rulebook includes two rules that are in the same equivalence
class. The minimal example is a rulebook~$\left\langle \mathcal{R},\rleq\right\rangle $
that has two rules~$r_{1},r_{2}$ such that~$r_{1}\rleq r_{2}$
and~$r_{2}\rleq r_{1}$. The induced order~$\les$ on the realizations
is that of the product order:
\[
x\les y\ \ \text{iff}\ \ r_{1}(x)\leq r_{1}(y)\ \wedge\ r_{2}(x)\leq r_{2}(y).
\]
We might ask whether we can ``aggregate'' the two rules into one.
The answer is positive, given the conditions in the following definition.
\begin{defn}[Rule aggregation operation]
\label{def:op-aggreg}Consider a rulebook~$\left\langle \mathcal{R},\rleq\right\rangle $
in which there are two rules~$r_{1},r_{2}\in\mathcal{R}$ that are
in the same equivalence class defined by~$\rleq$. Then it is allowed
to ``aggregate'' the two rules into a new rule~$r'$, defined by
\[
r'(x)=\alpha(r_{1}(x),r_{2}(x)),
\]
where~$\alpha$ is an embedding of the product pre-order into~$\mathbb{R}_{+}$. 

In particular, allowed choices for $\alpha$ include linear combinations
with positive coefficients ($\alpha(r_{1},r_{2})=a\,r_{1}+b\,r_{2}$)
and other functions that are strictly monotone in both arguments.
\begin{comment}
 operations such as taking the max of the two violations ($\alpha(r_{1},r_{2})=\max(r_{1},r_{2})$).
\end{comment}
\begin{comment}
Actually, in this set up, max won't quite work. Say, we have two equivalent
rules r1, r2, and two trajectories x, y such that r2(x) = r2(y) =
2, but r1(x) = 0 < 1 = r1(y). In this case x < y, but after aggregation
r1,2 = max(r1, r2) we have x\textasciitilde y.

In the current presentation the function \textbackslash alpha must
be strictily monotone in each coordinate.
\end{comment}
\end{defn}

\subsubsection{Rule augmentation}

Adding a rule to a rulebook is a potentially destructive operation.
In general, we can preserve the existing order only if the added rule
is below every other.
\begin{defn}[Rule augmentation]
\label{def:op-augment}The operation of rule augmentation consists
in adding to the rulebook~$\mathcal{R}$ a rule~$r'$ such that~$r'<r$
for all~$r\in\mathcal{{R}}.$
\end{defn}

\subsubsection{Properties preserved by the three operations}

We will show that the three operations create a rulebook that is a
refinement of the original rulebook, in the sense of \prettyref{def:rulebook-refinement-strict}. 

\begin{figure}[b]
\centering{}\includegraphics[width=1\columnwidth]{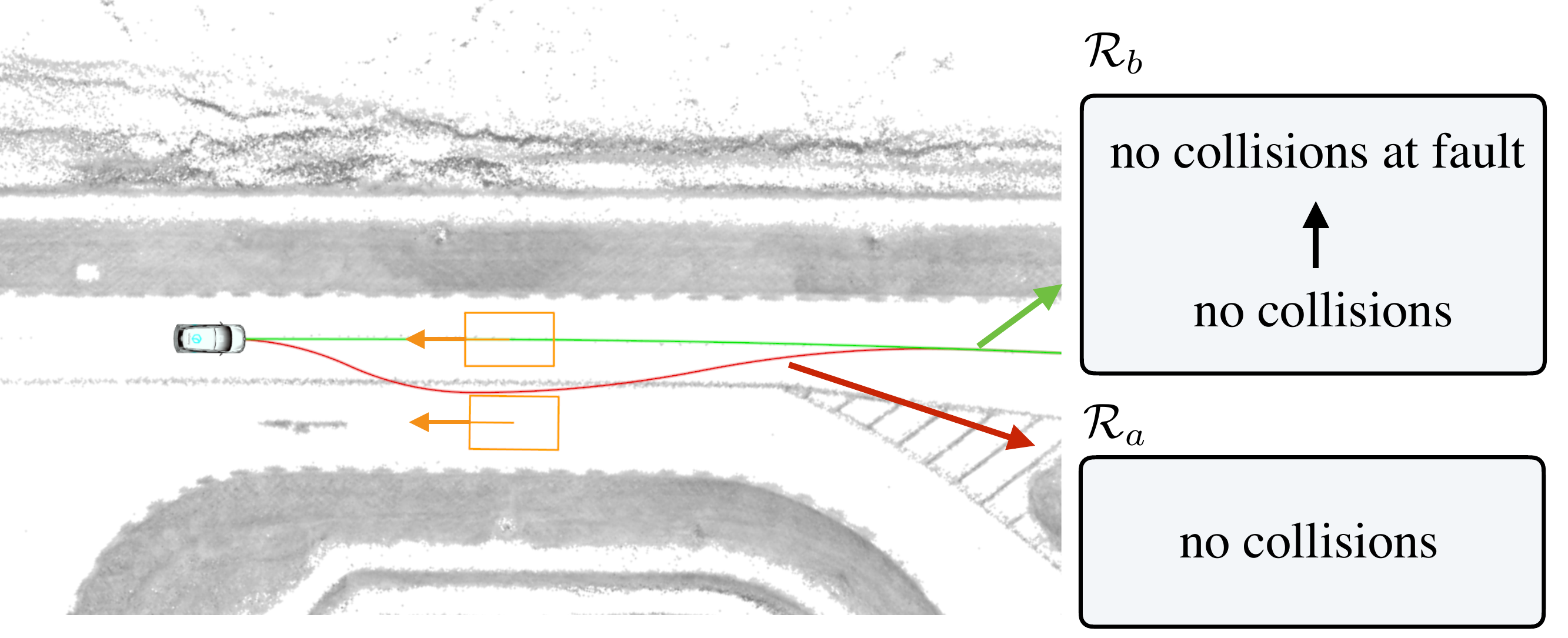}
\caption{\label{fig:experiment:collision}Trajectories planned in the unavoidable
collision scenario with different versions of the rulebooks. (See
attached videos for experiment.) The orange rectangles are the traffic
vehicles, moving towards the ego vehicle at the speed of 1.0 m/s.
The red trajectory is chosen when collision at fault and collision
caused by third-party are treated equally whereas the green trajectory
is chosen when collision at fault is higher in the rulebooks hierarchy
than the collision caused by third-party. }
 
\end{figure}

\begin{defn}
\label{def:rulebook-refinement-strict}A rulebook $\left\langle \mathcal{R}_{1},\rleq_{1}\right\rangle $
a\emph{ strict refinement }of $\left\langle \mathcal{R}_{2},\rleq_{2}\right\rangle $
if its induced strict pre-order~$\less_{2}$ refines~$\less_{1}$. 

\begin{comment}
should we define order refinement?
\end{comment}
\begin{comment}
A relatively inexpert user can take a ``safe'' rulebook, and meddle
with it with these operators, without compromising safety. \TODO{expand or remove}
\end{comment}
\end{defn}
One can prove this theorem:
\begin{thm}
Applying one of the three operations (augmentation, refinement, aggregation)
to a rulebook~$\mathcal{R}_{1}$ creates a rulebook~$\mathcal{R}_{2}$
that is a strict refinement of~$\mathcal{R}_{1}$ in the sense of~\prettyref{def:rulebook-refinement-strict}.
\end{thm}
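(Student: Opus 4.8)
The plan is to unwind \prettyref{def:rulebook-refinement-strict}: writing $\les_{i}$ for the preorder on $\Xi$ induced by $\mathcal{R}_{i}$, $\less_{i}$ for its strict part, and $\sim_{i}$ for the associated equivalence, I must show that $x\less_{1}y$ implies $x\less_{2}y$ for all $x,y\in\Xi$ --- equivalently, that $x\les_{1}y$ together with $y\not\les_{1}x$ forces both $x\les_{2}y$ and $y\not\les_{2}x$. After the (routine) check that each operation does yield a genuine rulebook, I would treat the three operations in turn, since priority refinement and rule augmentation are essentially bookkeeping while aggregation carries the real content. Two facts get used throughout: for each operation the priority order among the rules that ``survive'' is unchanged, and in the aggregation case the new rule $r'$ inherits the common rank of $r_{1}$ and $r_{2}$, so ``$s\rleq r'$'' in $\mathcal{R}_{2}$ is equivalent to ``$s\rleq r_{1}$'' for every old $s\notin\{r_{1},r_{2}\}$; and, by the proposition above, $u\sim_{i}v$ holds iff every rule of $\mathcal{R}_{i}$ agrees on $u$ and $v$.

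\emph{Priority refinement and augmentation.} For priority refinement, $\mathcal{R}_{2}$ has the same rules with $\rleq_{2}\supseteq\rleq_{1}$. If $x\les_{1}y$ then $x\les_{2}y$: given $r$ with $r(y)<r(x)$, the rule $r'>_{1}r$ with $r'(x)<r'(y)$ supplied by $x\les_{1}y$ still satisfies $r'>_{2}r$; and $\sim_{1}$, $\sim_{2}$ coincide (same rule set), so $y\not\les_{1}x$ gives $x\not\sim_{1}y$, hence $x\not\sim_{2}y$, which together with $x\les_{2}y$ yields $x\less_{2}y$. For augmentation, $\mathcal{R}_{2}=\mathcal{R}_{1}\cup\{r'\}$ with $r'$ strictly below every old rule; here $x\les_{1}y$ alone need \emph{not} imply $x\les_{2}y$ (this is the failure of the loose notion of refinement that the text warns about), so I would exploit $y\not\les_{1}x$ directly: it furnishes an old rule $r_{0}$ with $r_{0}(x)<r_{0}(y)$ that no $s>_{1}r_{0}$ overrides. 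Then $x\les_{2}y$ holds, since a rule in $\mathcal{R}_{1}$ with $r(y)<r(x)$ is handled as before and the case $r=r'$ with $r'(y)<r'(x)$ is handled by $r_{0}$ itself ($r_{0}>_{2}r'$ and $r_{0}(x)<r_{0}(y)$); and $y\not\les_{2}x$ holds, since augmentation only adds a rule strictly below $r_{0}$, so the set of rules above $r_{0}$ is unchanged and $r_{0}$ remains a non-overridden witness of $r_{0}(x)<r_{0}(y)$. Hence $x\less_{2}y$.

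\emph{Rule aggregation.} This is the crux: $r_{1},r_{2}$ are replaced by $r'=\alpha(r_{1},r_{2})$ with $\alpha$ strictly monotone in each argument. Strict monotonicity yields that $(a,b)\rleq(a',b')$ in the product order implies $\alpha(a,b)\rleq\alpha(a',b')$, strictly when $(a,b)\ne(a',b')$; contrapositively, $\alpha(a,b)<\alpha(a',b')$ forces $a<a'$ or $b<b'$. I would first prove $x\les_{1}y\Rightarrow x\les_{2}y$ by case analysis on a rule $r\in\mathcal{R}_{2}$ with $r(y)<r(x)$: the one delicate case is $r\ne r'$ with the overriding rule delivered by $x\les_{1}y$ equal to $r_{1}$ (so $r_{1}(x)<r_{1}(y)$ and $r_{1}>_{1}r$); if $r_{2}(x)\rleq r_{2}(y)$ then $r'=\alpha(r_{1},r_{2})$ itself overrides $r$ (using rank inheritance and the displayed monotonicity), while if $r_{2}(y)<r_{2}(x)$ a second application of $x\les_{1}y$ to $r_{2}$ produces a rule strictly above $r_{2}$ --- hence above both $r$ and $r'$ by rank inheritance, and necessarily un-aggregated --- which is the needed witness. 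To obtain the strict statement I would then show: if $x\les_{1}y$ and $x\sim_{2}y$, then $x\sim_{1}y$. Indeed $x\sim_{2}y$ forces every old rule $\ne r_{1},r_{2}$ to agree on $x,y$ and $\alpha(r_{1}(x),r_{2}(x))=\alpha(r_{1}(y),r_{2}(y))$; if, say, $r_{1}(x)<r_{1}(y)$, the contrapositive above forces $r_{2}(y)<r_{2}(x)$, and then $x\les_{1}y$ applied to $r_{2}$ produces a rule strictly above $r_{2}$, which is un-aggregated and therefore forced to agree on $x,y$, yet on which $x$ is strictly better --- absurd; the remaining cases ($r_{1}(y)<r_{1}(x)$, or $r_{2}$ disagreeing) are symmetric. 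Finally, $x\less_{1}y$ is equivalent to $x\les_{1}y$ and $x\not\sim_{1}y$; the first gives $x\les_{2}y$, while $y\les_{2}x$ would give $x\sim_{2}y$, hence $x\sim_{1}y$ by the claim, a contradiction --- so $x\less_{2}y$.

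The main obstacle is this last aggregation argument. Because $\alpha$ lands in a totally ordered set it may identify the violation profiles of two realizations the original rulebook left \emph{incomparable} (e.g.\ $\alpha=r_{1}+r_{2}$ on the profiles $(0,2)$ and $(1,1)$), which is exactly why the loose notion of refinement fails; what has to be excluded is that it identifies the profiles of two realizations that were ranked \emph{strictly}. The non-obvious point is that it is \emph{strict} monotonicity of $\alpha$, not mere monotonicity, that forbids this: it converts a hypothetical collapse ``$\alpha(r_{1}(x),r_{2}(x))=\alpha(r_{1}(y),r_{2}(y))$ with $r_{1}(x)<r_{1}(y)$'' into the strict inequality ``$r_{2}(y)<r_{2}(x)$'', which the original order must already have overridden by a higher --- necessarily un-aggregated --- rule, in the same spirit as \prettyref{lem:transitivity}.
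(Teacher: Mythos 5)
Your argument is correct, but it takes a genuinely different route from the paper's. The paper does not argue operation-by-operation: it introduces the general machinery of \emph{aggregative maps} and \emph{surjective rulebook embeddings}, proves via a chain construction (Lemma~\ref{lem:surjective-embeddings}, which repeatedly bounces between a rule in a preimage preferring $y$ and a strictly higher rule preferring $x$, terminating by finiteness) that any surjective embedding makes $\les_{2}$ refine $\les_{1}$ in both the weak and strict senses (Theorem~\ref{thm:surjective-morphisms-preserve-order}), and then observes that priority refinement and aggregation are instances of surjective embeddings, while augmentation is a \emph{dominant} embedding handled by a separate, short theorem that preserves only the strict order. Your proof replaces the chain construction with a direct two-step analysis: you exploit the fact that a single aggregation collapses exactly one equivalence pair $\{r_{1},r_{2}\}$, so any rule strictly above $r_{2}$ is automatically un-aggregated and serves as a terminal witness --- the chain has length at most two, and your auxiliary claim ``$x\les_{1}y$ and $x\sim_{2}y$ imply $x\sim_{1}y$'' is a clean substitute for the paper's handling of the strict part. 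What the paper's abstraction buys is generality and composability (Lemma~\ref{lem:composition-aggregative-maps} lets one collapse many groups of rules and add many priority relations in one step, and compose such operations); what your version buys is a self-contained elementary proof of exactly the stated theorem, including an explicit verification --- only implicit in the paper --- that for augmentation the non-overridden witness $r_{0}$ supplied by $y\not\les_{1}x$ survives, so that $y\not\les_{2}x$ and not merely $x\les_{2}y$ holds. Your closing observation that \emph{strict} monotonicity of $\alpha$ is what prevents a strictly ranked pair from collapsing to equivalence is precisely the content of requiring $\alpha$ to be an order \emph{embedding} in the paper's Definition of aggregative maps.
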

\begin{comment}
Detailed proofs of this and ancillary results are available in the
extended version of this report, available at website {\small{}\url{http://rulebooks.tech}},
together with high-resolution videos of the experiments.
\end{comment}

The proofs for these and ancillary results are in the appendix.%
\begin{comment}
 are available in the extended technical report~\cite{rulebooks-arxiv}.
\end{comment}

\begin{comment}
\begin{figure}[h]
\centering{}\includegraphics[width=1\columnwidth]{../images-kostya-white-bg/deceleration.png}
\caption{\label{fig:deceleration}Influence of deceleration factor during overtaking
of a parked vehicle}
\end{figure}
\end{comment}
\begin{comment}

\subsubsection{Reversing\label{sec:reverse}}

Reversing is generally regarded as complicated maneuver, that needs
to be exercised with extra caution, especially for larger vehicles
carrying passengers. Nonetheless, reversing may sometimes be necessary.
One way to specify such a behavior is by including a ``no reverse''
rule in the rulebook above the ``geometric'' level. This way, a
path without reversing will be chosen, whenever it is available, and
yet the vehicle will do the reversing if it is unable to proceed otherwise.

\begin{figure}[htb]
\centering \includegraphics[width=1\columnwidth]{../images-kostya-white-bg/reverse.png}
\caption{\label{fig:reverse}It may be preferable to take a detour than to
reverse}
 
\end{figure}
\end{comment}

\section{Experiments}

\begin{comment}
The rulebooks formalism is applied in the minimum-violation planning
framework \xxx. 

The actual implementation is complicate with about \xxx rules. However,
for the simplicity of the presentation, we select only a subset of
rules to illustrate how the formalism supports different scenarios. 
\end{comment}
We show planning results for different rulebooks for the nuTonomy
R\&D platform (Renault Zoe). The experiments assume left-hand traffic
(Singapore/UK regulations). %
\begin{comment}
Videos are linked from the extended technical report~\cite{rulebooks-arxiv}.
\end{comment}

\subsubsection{Unavoidable collision}

This experiment illustrates unavoidable collision as described in~\exaref{liability}.
We set up the scenario~(\figref{experiment:collision}) such that
the planner is led to believe that 2 vehicles instantaneously appear
at approximately 12~m from the ego vehicle and slowly move towards
the ego vehicle at 1.0~m/s, while the speed of the ego vehicle is
9.5~m/s. \figref{experiment:collision}~shows the belief state when
the vehicles first appear. We also limit the allowed deceleration
to 3.5~$\text{m}/\text{s}^{2}$. It can be verified that collision
is unavoidable under these conditions.

For any given trajectory $x$, we define the collision cost as 
\begin{equation}
\mu(x)=v_{x,\text{col}},\label{eq:experiment:collision_cost}
\end{equation}
where $v_{x,\text{col}}$ is the expected longitudinal speed of the
ego vehicle at collision, assuming that the ego vehicle applies the
maximum deceleration from the current state.

First, consider the case where the collision cost~(\ref{eq:experiment:collision_cost})
is applied to any collision. In this case, it is more preferable to
swerve and hit the traffic vehicle in the opposite lane since the
swerving trajectory gives the ego vehicle more distance to decelerate;
hence, reducing the expected speed at collision.

Next, collision at fault~$\mu_{1}$ is differentiated from collision
caused by third-party~$\mu_{2}$ with priority~$\mu_{2}<\mu_{1}$.
The optimal trajectory in this case is to stay within lane and collide
with the traffic vehicle that is moving against the direction of traffic.

\subsubsection{Clearance and lane keeping}

In this experiment, we demonstrate how different rulebooks in~\exaref{partial}
lead to different behaviors when overtaking a stationary vehicle.
The blockage cost $\beta$, lane keeping cost $\lambda$ and length
$\alpha$ are defined as in~\exaref{partial}, but we re-define the
clearance cost as 
\begin{equation}
\kappa(x)=\max(0,C_{0}-l_{x}),
\end{equation}
where~$l_{x}$ is the minimum lateral distance between the stationary
vehicle and trajectory~$x$.%
\begin{comment}
 (Different choices of $\kappa$ would give different shapes of the
trajectories but qualitatively similar results.)
\end{comment}

In particular, we consider two different rulebooks (\figref{nontrivial}):
\begin{align}
\mathcal{R}_{1} & =\{\alpha<\lambda<\kappa<\beta\},\quad\text{(clearance first)}\label{eq:experiment:clearance_first}\\
\mathcal{R}_{2} & =\{\alpha<\kappa<\lambda<\beta\}.\quad\text{(lane keeping first)}\label{eq:experiment:lane_keeping_first}
\end{align}

The rulebook described in (\ref{eq:experiment:lane_keeping_first})
corresponds to the case where satisfying the lane keeping rule is
preferred over satisfying the clearance rule whereas the rulebook
described in (\ref{eq:experiment:clearance_first}) corresponds to
the case where satisfying the clearance rule is preferred over satisfying
the lane keeping rule.

\figref{experiment:clearance_vs_inlane} shows the optimal paths found
by the system in the two cases. With rulebook~$\mathcal{R}_{2}$,
the optimal trajectory is such that the vehicle footprint remains
within lane, leading to the violation of the clearance rule. In contrast,
when rulebook~$\mathcal{R}_{1}$ is applied, the trajectory crosses
the lane boundary to give sufficient clearance from the stationary
vehicle. 

\begin{figure}[h]
\centering{}\includegraphics[width=0.45\textwidth]{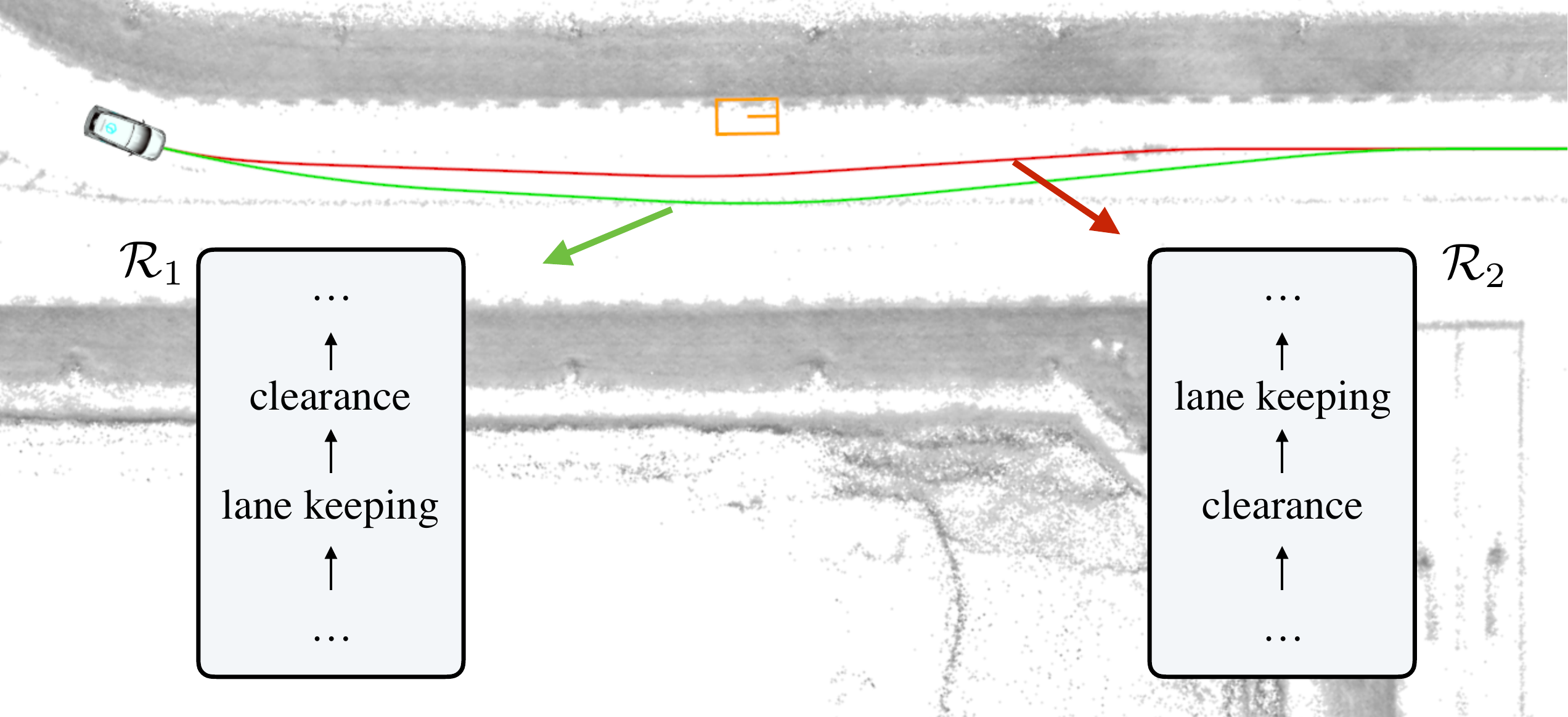}
\caption{\label{fig:experiment:clearance_vs_inlane}Trajectories planned in
the vehicle overtaking scenario with different rulebooks. (See attached
videos for experiment.) The orange rectangle is the stationary vehicle.
The red trajectory is when the rulebook (\ref{eq:experiment:lane_keeping_first})
is used, whereas the green trajectory is when the rulebook (\ref{eq:experiment:clearance_first})
is used. }
\end{figure}

\subsubsection{Lane change near intersection}

Consider the scenario where the autonomous vehicle needs to perform
a lane change in the vicinity of an intersection~(\figref{experiment:lane_change}).
The vehicle needs to turn left at the intersection; therefore, it
is required to be on the left lane before entering the intersection.
However, there is a stationary vehicle that prevents it from completing
the maneuver at an appropriate distance from the intersection.

For the simplicity of the presentation, we assume that any trajectory
$x$ only crosses the lane boundary once at $\eta_{x}$. The lane
change near intersection cost is then defined as $\zeta(x)=\max(0,D_{lc}-d_{\text{int}}(\eta_{x}))$,
where $D_{lc}$ is a predefined threshold of the distance from intersection,
beyond which changing lane is not penalized and for any pose $p$,
$d_{\text{int}}(p)$ is the distance from $p$ to the closest intersection.

Additionally, we define the turning cost $\tau(x)$ as the $L_{1}$-norm
of the heading difference between~$x$ and the nominal trajectory
associated with each lane. Consider the case where $\zeta$ and $\tau$
are in the same equivalence class and these rules are aggregated (\defref{op-aggreg})
as
\begin{equation}
r_{\zeta,\tau}(x)=\zeta(x)+c_{\tau}\tau(x),\label{eq:experiment:lane_change_int}
\end{equation}
where~$c_{\tau}>0$ is a predefined constant. In this experiment,
we consider the aggregated cost $r_{\zeta,\tau}$ and the blockage
cost $\beta$ defined in \exaref{partial} with priority~$r_{\zeta,\tau}<\beta$.
\figref{experiment:lane_change}~shows how the choice of~$c_{\tau}$
affects the optimal trajectory.

\begin{figure}[h]
\centering{}\includegraphics[width=1\columnwidth]{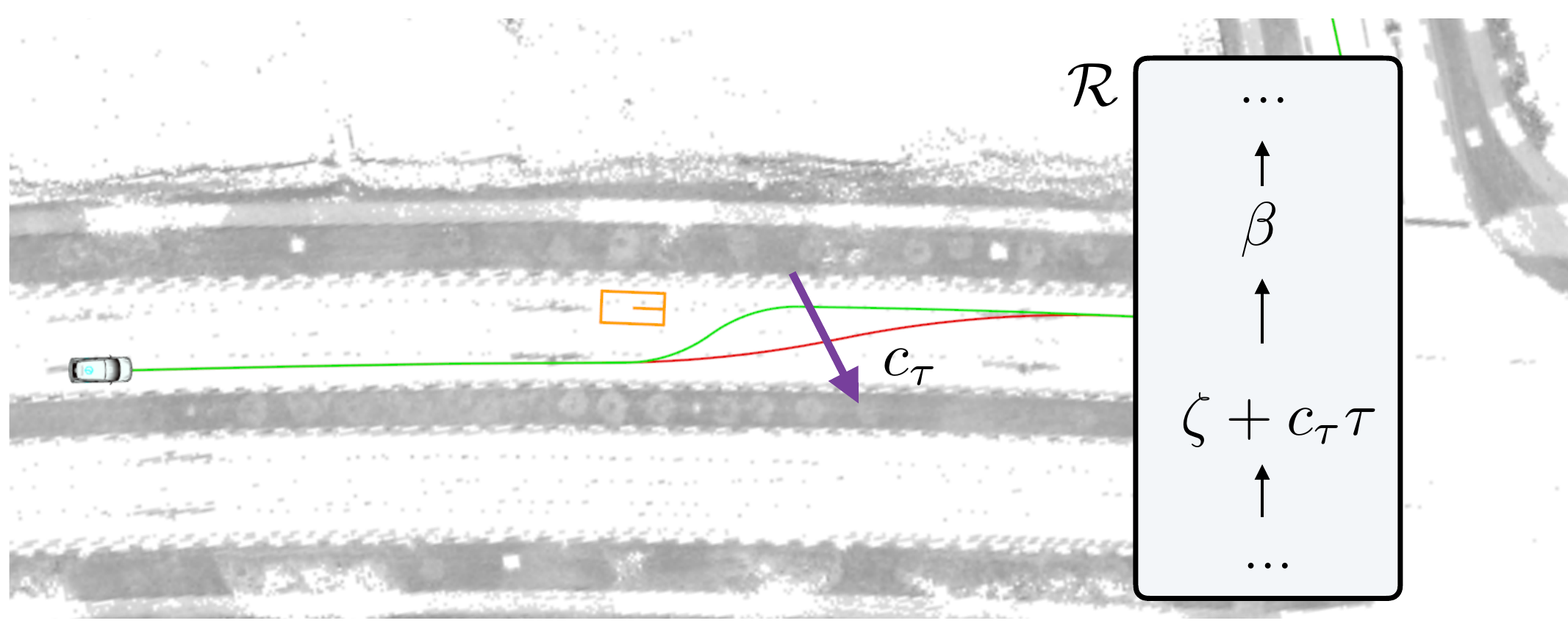}
\caption{\label{fig:experiment:lane_change}Trajectories planned in the lane
changing near intersection scenario. (See attached videos for experiment.)
The orange rectangle is the stationary vehicle at pose $p_{v}$ with
$d_{int}(p_{v})<D_{lc}$. The green trajectory is the optimal trajectory
for $c_{\tau}=0$ whereas the red trajectory is the optimal trajectory
for some $c_{\tau}>0$.}
\end{figure}

\section{Discussion and future work}

We have shown by way of a few examples how the rulebooks approach
allows easy and intuitive tuning of self-driving behavior. What is
difficult to convey in a short paper is the ability of the formalism
to scale up. In our production code at nuTonomy, corresponding to
level 4 autonomy in a limited operating domain, our rulebooks have
about 15 rules. For complete coverage of Massachusetts or Singapore
rules, including rare corner cases (such as ``do not scare farm animals''),
we estimate about 200 rules, to be organized in about a dozen ordered
priority groups~(\prettyref{fig:Macro-groups-for-rules}). 

Except the extrema of safety at the top, all the other priorities
among rule groups are somehow open for discussion. What we realized
is that some of the rules and rules priorities, especially those that
concern safety and liability, must \emph{be part of nation-wide and
global regulations} to be developed after an\emph{ informed public
discourse}; it should not be up to engineers to choose these important
aspects. The rulebooks formalism allows to have one such shared, high-level
specification that gives minimal constraints to the behavior; then,
the rest of the rules and priority choices can be considered ``implementation
details'' that might change from manufacturer to manufacturer.

\vfill

\begin{figure}[H]
\centering{}\includegraphics[height=9.5cm]{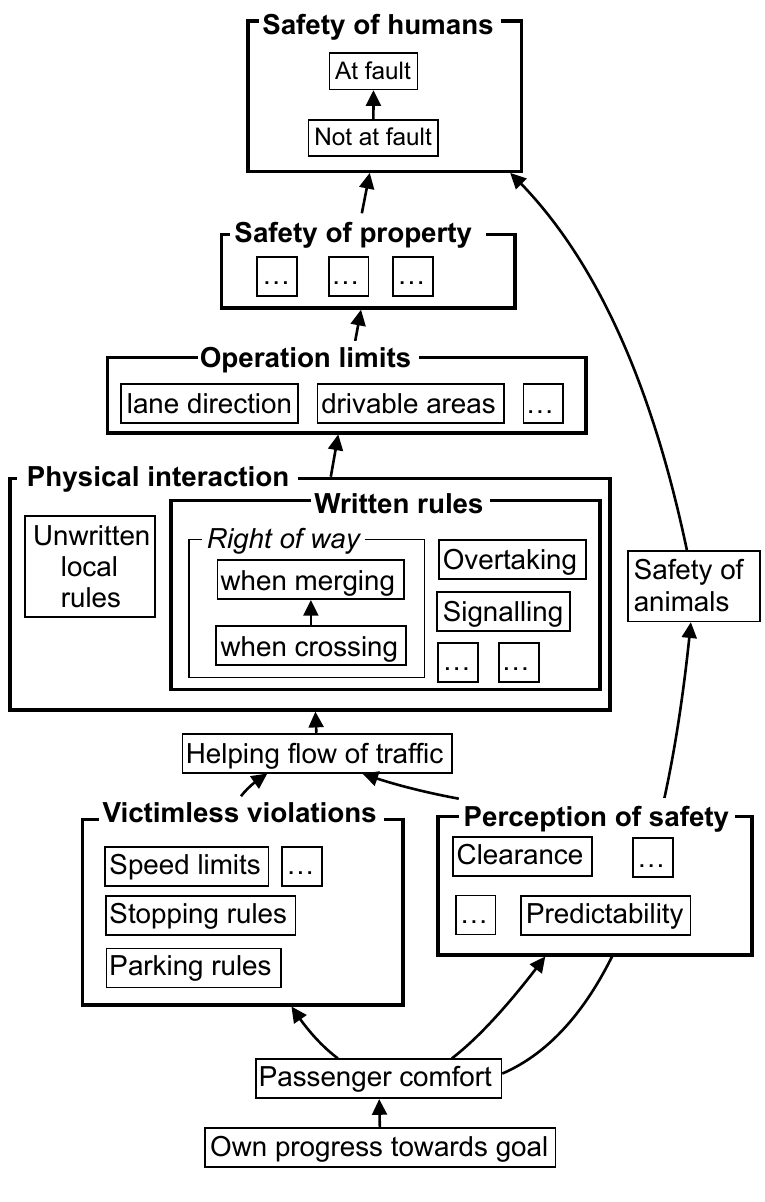}\caption{\label{fig:Macro-groups-for-rules}Illustrative example of possible
rule groups for an autonomous taxi in an urban driving scenario. At
the top of the hierarchy there are rules that guarantee safety of
humans; at the bottom, we have comfort constraints and progress goals.
At the top, the rules are written analytically; at the bottom, some
rules are learned from observed behavior. Rules at the bottom also
tend to be platform- and implementation- specific. Except for human
safety at the top, all other priorities among rule groups are open
for discussion. }
\end{figure}

\begin{comment}
\xxx A rulebooks-based approach provides the following properties:
\begin{itemize}
\item transparency
\item compositionality 
\item traceability
\end{itemize}
\end{comment}

\clearpage

\bibliographystyle{IEEEtran}
\bibliography{bibs}

\clearpage

\appendices

\section{Orders and Preorders}
\label{sec:orders-preorders}

This appendix recalls some standard definitions used in the development of the
rulebooks formalism.

\begin{defn}
  A \textit{preorder} on a set \( Z \) is a reflexive transitive binary relation
  \( \les \), i.e., a binary relation such that for all \( z \in Z \) one has \(
  z \les z \) and for all \( x, y, z \in Z \)
\[ x \les y \textrm{ and } y \les z \implies x \les z. \] Given a preorder \(
(Z, \les) \) and \( y, z \in Z \), the notation \(y < z\) is shorthand
for \(y \les z \) and \( z \not{\mkern -1mu}\les y \). A preorder is said to be
\textit{total} if additionally for any \( x, y \in Z \) either \( x \les y \) or
\( y \les x \).

With any preorder one associates an equivalence relation: elements
\( x, y \in Z \) are equivalent (denoted as \( x \sim y \)) whenever
\( x \les y \) and \( y \les x \).  If this equivalence relation is trivial
(i.e., \( x \sim y \) if and only if \( x = y \) ), then we say that \( \les \)
is a partial \textit{order} on \( Z \). We use \( x \le y \) to denote
(\( x < y \) or \( x = y \)). In particular, one always has
\( \le\, \subseteq\, \les \), and a preorder is an order if and only if
\( \le\, =\, \les \).
\end{defn}

\begin{defn}
  An \textit{embedding} between preorders \( Z_1 \) and \( Z_2 \) is a
  map \( \phi : Z_1 \to Z_2 \) such that for all \( x, y \in Z_1 \) one has
  \[ x \les y \implies \phi(x) \les \phi(y) \textrm{ and } x < y \implies
    \phi(x) < \phi(y). \]
\end{defn}

Note that for any embedding \( \phi \), equivalence \( x \sim y \) implies
\( \phi(x) \sim \phi(y) \).

\begin{defn}
  Given a set \( Z \) and two preorders \( \les_{1} \), \( \les_{2} \) on it, we
  say that \( \les_{2} \) \textit{refines} \( \les_{1} \) if the identity map
  \( \mathrm{id} : Z \to Z \) is an embedding from \( (Z, \les_{1}) \)
  onto \( (Z, \les_{2}) \).
\end{defn}

\section{Operations on Rulebooks}

\label{sec:compliance}

As explained in Section 2, a rulebook induces a partial preorder
\( \les \)~on realizations. We are interested in operations on
the rulebooks that preserve existing relations, but may possibly introduce new
comparisons between realizations, i.e., the preorder on realizations may be
refined.

More formally, our goal is to find conditions on a map
\( \phi : \mathcal{R}_{1} \to \mathcal{R}_{2} \) between two rulebooks that
guarantee that \( \les_{2} \) is a refinement of \( \les_{1} \). To motivate
concepts that will follow, let us begin with the simplest case of a rulebook
\( \mathcal{R}_{2} = \{ u \} \) consisting of a single rule. Let
\( \{r_{1}, \ldots, r_{n}\} = \mathcal{R}_{1} \) be the rules in the domain, and
the map \( \phi \) therefore collapses all \( r_{i} \) onto \( u \):
\( \phi(r_{i}) = u \) for all \( i \). At the moment we do not impose any
assumptions on \( r_{i} \) --- some of them may be comparable, some are
equivalent or independent. The question then becomes when the (total) preorder
imposed by \( u \) on \( \Xi \) is a refinement of the (partial) preorder given by
\( \{r_{1},\ldots, r_{n}\} \). Recall that according to the definition of a
refinement that amounts to:
\begin{multline}
  \label{eq:map-phi-refines}
  \forall x, y \in \Xi \quad \bigl(x \les_{1} y \implies x \les_{2} y\bigr)\quad
  \textrm{and} \\ \bigl(x <_{1} y \implies x <_{2} y\bigr).
\end{multline}

\subsection{Aggregative maps}
\label{sec:aggregative-maps}

The first observation is that \eqref{map-phi-refines} necessarily implies that
the value \( u(x) \) depends only on the values \( r_{1}(x), \ldots, r_{n}(x) \)
and not on the realization \( x \) itself.  Indeed, if \( x \) and \( y \) are
two realizations such that \( r_{i}(x) = r_{i}(y) \) for all \( i \), yet
\( u(x) \ne u(y) \) (say, \( u(x) < u(y) \) for definiteness), then
\( y \les_{1} x \), but \( y \not{\mkern-1mu}\les_{2} x \), contradicting
\eqref{map-phi-refines}.

One may view \( \mathcal{R}_{1} \) as providing a map from all realizations to
\( \bigl(\mathbb{R}^{+}\bigr)^{n} \) via
\[ x \mapsto \bigl(r_{1}(x), \ldots, r_{n}(x)\bigr), \]
and similarly \( \mathcal{R}_{2} \), consisting just of a single rule, can be
identified with a map \( u : \Xi \to \mathbb{R}^{+} \).  The observation above
can then be reinterpreted to say that there exists a map \( \alpha :
(\mathbb{R}^{+})^{n} \to \mathbb{R}^{+} \) making the following diagram
commutative:
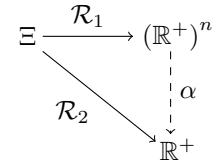
\begin{figure}[htb]
  \centering
  \begin{tikzpicture}
    \draw (0,1.5) node {\( \Xi \)};
    \draw (2,1.5) node {\( \bigl(\mathbb{R}^{+}\bigr)^{n} \)};
    \draw (2, 0) node {\( \mathbb{R}^{+} \)};
    \draw[->] (0.2, 1.5) -- (1.4, 1.5) node[pos=0.5, anchor=south] {\( \mathcal{R}_{1} \)};
    \draw[->, dashed] (1.9, 1.3) -- (1.9, 0.2) node[pos=0.5, anchor=west] {\( \alpha \)};
    \draw[->] (0.2, 1.3) -- (1.7, 0.1) node[pos=0.5, anchor=east, yshift=-2mm]
    {\( \mathcal{R}_{2} \)};
    \draw (1, 2.5) node {\( \forall x \in \Xi \quad \alpha\bigl(r_{1}(x), \ldots, r_{n}(x)\bigr) = u(x). \)};
  \end{tikzpicture}
  \caption{Factorization of \( u : \Xi \to \mathbb{R}^{+} \)}
  \label{fig:aggregative-diagram}
\end{figure}

What can be said about the map \( \alpha \) itself? The set \(
\bigl(\mathbb{R}^{+}\bigr)^{n} \) has a natural partial order on it, called the
product order, where given \( \vec{a}, \vec{b} \in
\bigl(\mathbb{R}^{+}\bigr)^{n} \), 
\[ \vec{a} = \bigl(a_{1}, \ldots, a_{n}\bigr), \quad \vec{b} = \bigl(b_{1},
  \ldots, b_{n}\bigr), \]
one denotes \( \vec{a} \le \vec{b} \) whenever \( a_{i} \le b_{i} \) for all
\( i \). Note that if \( x, y \in \Xi \) are two realizations such that
\[ \bigl(r_{1}(x), \ldots, r_{n}(x)\bigr) \le \bigl(r_{1}(y), \ldots,
  r_{n}(y)\bigr), \]
then necessarily \( x \les_{1} y \) and in view of \eqref{map-phi-refines} we
therefore need to have \( u(x) \le u(y) \). If moreover
\( r_{i}(x) < r_{i}(y) \) for at least some \( i \), then \( x <_{1} y \), and so
\( u(x) < u(y) \) must be the case. This observation can be summarized as
follows: If \( S \subseteq \bigl(\mathcal{R}^{+}\bigr)^{n} \), \( T \subseteq
\mathbb{R}^{+} \) are the sets
\[ S = \bigl\{ \bigl(r_{1}(x), \ldots, r_{n}(x)\bigr) : x \in \Xi \bigr\}, \quad T = \bigl\{u(x) : x
  \in \Xi\bigr\}, \]
then \( \alpha : S \to T \) is an embedding of partial orders.  This brings us to
the following

\begin{defn}[Aggregative map]
  We say that it is \textit{admissible to collapse rules} \( r_{1}, \ldots, r_{n} \) to a
  rule \( u \) if, in the notation above, the map \( \alpha \), that makes
  Figure~\ref{fig:aggregative-diagram} commutative, exists, and
  \( \alpha : S \to T \) is an embedding of partial orders.

  A map between rulebooks \( \phi : \mathcal{R} \to \mathcal{R}' \) is said to
  be \textit{aggregative} if for all \( u \in \phi(\mathcal{R}') \) it is
  admissible to collapse rules \( \phi^{-1}(u) \) onto \( u \).
\end{defn}

The following lemma shows that \textit{surjective} aggregative maps can be
composed yielding another surjective aggregative map.

\begin{lem}
  \label{lem:composition-aggregative-maps}
  Composition of surjective aggregative maps is aggregative.
\end{lem}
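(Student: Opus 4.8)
We are given surjective aggregative maps $\phi : \mathcal{R}_1 \to \mathcal{R}_2$ and $\psi : \mathcal{R}_2 \to \mathcal{R}_3$, and we want to show $\psi \circ \phi : \mathcal{R}_1 \to \mathcal{R}_3$ is aggregative. Since composition of surjections is a surjection, the only thing to verify is that for every $w \in \mathcal{R}_3$ it is admissible to collapse $(\psi\circ\phi)^{-1}(w) \subseteq \mathcal{R}_1$ onto $w$. Fix such a $w$, and enumerate the rules it pulls back to in $\mathcal{R}_2$ as $u_1, \dots, u_k = \psi^{-1}(w)$, and then each $u_j$ pulls back under $\phi$ to some set of rules in $\mathcal{R}_1$; altogether these partition $(\psi\circ\phi)^{-1}(w)$ into blocks. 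The plan is to produce the required aggregating function $\gamma$ as a composite: first apply the aggregating maps $\alpha_j$ (one for each $u_j$, guaranteed by admissibility of $\phi$) block-by-block to go from the $\mathcal{R}_1$-value vector to the $\mathcal{R}_2$-value vector $(u_1(x), \dots, u_k(x))$, and then apply the aggregating map $\beta$ (guaranteed by admissibility of $\psi$) that sends $(u_1(x), \dots, u_k(x))$ to $w(x)$.

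\textbf{Key steps.} First I would set up notation carefully: let $S_1 \subseteq (\mathbb{R}^+)^N$ be the image of $\Xi$ under the rules in $(\psi\circ\phi)^{-1}(w)$, let $S_2 \subseteq (\mathbb{R}^+)^k$ be the image of $\Xi$ under $u_1, \dots, u_k$, and let $T \subseteq \mathbb{R}^+$ be the image of $\Xi$ under $w$. Admissibility of $\phi$ at each $u_j$ gives an order-embedding $\alpha_j$ defined on the image set of the $j$-th block of rules, with $\alpha_j$ reconstructing $u_j(x)$ from that block's values; assembling these coordinatewise gives a map $A : S_1 \to S_2$ with $A(r_1(x),\dots,r_N(x)) = (u_1(x),\dots,u_k(x))$. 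Admissibility of $\psi$ at $w$ gives an order-embedding $\beta : S_2 \to T$ with $\beta(u_1(x),\dots,u_k(x)) = w(x)$. Define $\gamma = \beta \circ A$; then $\gamma$ makes the factorization diagram commute by construction, so the only remaining obligation is to show $\gamma : S_1 \to T$ is an order-embedding. Since $\beta$ is already an embedding, it suffices to show $A : S_1 \to S_2$ is an embedding of partial orders, and then invoke the (elementary) fact that a composition of order-embeddings is an order-embedding.

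\textbf{The main obstacle.} The crux is showing that the assembled map $A$ is an order-embedding $S_1 \to S_2$, i.e., that for $\vec{a}, \vec{b} \in S_1$ we have $\vec{a} \le \vec{b} \Leftrightarrow A(\vec{a}) \le A(\vec{b})$ and $\vec{a} < \vec{b} \Leftrightarrow A(\vec{a}) < A(\vec{b})$. The forward directions are easy: each $\alpha_j$ is monotone, so if $\vec a \le \vec b$ then each block comparison is preserved and hence $A(\vec a) \le A(\vec b)$; and if moreover some coordinate of $\vec a$ is strictly less, that coordinate sits in some block, and strictness of $\alpha_j$ on its image promotes the block comparison to a strict one, so $A(\vec a) < A(\vec b)$. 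The subtle direction is the reverse, reflecting the order: if $A(\vec a) \le A(\vec b)$, i.e. $\alpha_j(\vec a^{(j)}) \le \alpha_j(\vec b^{(j)})$ for all blocks $j$, we need $\vec a \le \vec b$ — but here we must be careful that $\vec a, \vec b$ are genuinely of the form $(r_i(x))_i$ and $(r_i(y))_i$ for realizations $x, y$, and that $\alpha_j$ is an embedding only on the \emph{image set} $S_1^{(j)}$ of the $j$-th block, not on all of $(\mathbb{R}^+)^{|block|}$. The key point is that $\alpha_j$ being an order-embedding on its domain means precisely that $\alpha_j(\vec a^{(j)}) \le \alpha_j(\vec b^{(j)})$ implies $\vec a^{(j)} \le \vec b^{(j)}$; gathering this over all blocks $j$ gives $\vec a \le \vec b$ coordinatewise, and likewise the strict version follows because $\alpha_j(\vec a^{(j)}) < \alpha_j(\vec b^{(j)})$ for at least one $j$ forces $\vec a^{(j)} < \vec b^{(j)}$ (hence $\vec a \ne \vec b$) while the non-strict comparisons at other blocks keep $\vec a \le \vec b$. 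This is exactly where surjectivity of $\psi$ (and $\phi$) is implicitly used: it guarantees every $u_j$ genuinely lies in the image, so that the admissibility hypothesis for $\phi$ does supply the needed $\alpha_j$ for each block. Once $A$ is shown to be an embedding, composing with $\beta$ finishes the proof.
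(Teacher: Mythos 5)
Your decomposition is exactly the paper's: factor $\gamma=\beta\circ(\alpha_1,\dots,\alpha_n)$ block-by-block, check commutativity of the diagram, and verify monotonicity plus strictness. The ``forward'' half of your argument (if $\vec a\le\vec b$ coordinatewise then $\gamma(\vec a)\le\gamma(\vec b)$, and a single strict coordinate propagates to a strict inequality through the relevant $\alpha_j$ and then through $\beta$) is precisely the paper's proof, and your remark about where surjectivity enters (every $u_j$ must have a nonempty $\phi^{-1}$-preimage so that an $\alpha_j$ is actually supplied) is correct.

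However, the step you single out as ``the crux'' --- showing that $A$ \emph{reflects} the order, i.e.\ $\alpha_j(\vec a^{(j)})\le\alpha_j(\vec b^{(j)})\implies \vec a^{(j)}\le\vec b^{(j)}$ --- is both unnecessary and, as argued, false. In this paper an \emph{embedding} of preorders (see the appendix on orders and preorders) is only required to \emph{preserve} $\les$ and $<$; it is not required to reflect them. Indeed it cannot be, since the codomain $\mathbb{R}_{+}$ is totally ordered while the product order on a block is not: the explicitly allowed aggregation $\alpha(r_1,r_2)=r_1+r_2$ sends the incomparable pair $(0,1)$, $(1,0)$ to equal values, so the implication you invoke fails for admissible $\alpha_j$. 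If reflection were part of the definition, the lemma's hypotheses would already exclude the intended examples. Fortunately the conclusion you need --- that $\gamma$ is an embedding in the paper's sense --- requires only the preservation direction, so deleting the reflection paragraph leaves a complete and correct proof that coincides with the paper's.
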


\begin{proof}
  Let \( \mathcal{R}, \mathcal{R}', \mathcal{R}'' \) be rulebooks, let
  \( \phi_{1} : \mathcal{R} \to \mathcal{R}' \),
  \( \phi_{2} : \mathcal{R}' \to \mathcal{R}'' \) be aggregative maps and let
  \( \psi : \mathcal{R} \to \mathcal{R}'' \) be the composition of the two,
  \( \psi = \phi_{2} \circ \phi_{1} \).  We need to show that \( \psi \) is
  aggregative and to this end pick some \( w \in \mathcal{R}'' \).  Let
  \( u_{1}, \ldots, u_{n} \) be the rules in \( \phi_{2}^{-1}(w) \), and for
  each \( 1 \le i \le n \) let \( r_{1}^{i}, \ldots, r_{m_{i}}^{i} \) be the
  rules enumerating \( \phi_{1}^{-1}(u_{i}) \) (see
  Figure~\ref{fig:preimage-of-w}).

  \begin{figure}[htb]
    \centering
    \begin{tikzpicture}
      \draw (0,3.5) node {\( r_{1}^{1}, \ldots, r_{m_{1}}^{1} \)};
      \draw (0,2.5) node {\( r_{1}^{2}, \ldots, r_{m_{2}}^{2} \)};
      \draw (0,1) node {\( r_{1}^{n-1}, \ldots, r_{m_{n-1}}^{n-1} \)};
      \draw (0,0) node {\( r_{1}^{n}, \ldots, r_{m_{n}}^{n} \)};
      \draw (3, 3.5) node {\( u_{1} \)};
      \draw[<-] (1, 3.5) -- (2.5, 3.5) node[pos=0.5, anchor=south] {\( \phi_{1}^{-1} \)};
      \draw (3, 2.5) node {\( u_{2} \)};
      \draw[<-] (1, 2.5) -- (2.5, 2.5);
      \draw (3, 1) node {\( u_{n-1} \)};
      \draw[<-] (1.3, 1) -- (2.5, 1);
      \draw (1.5, 1.85) node {\( \vdots \)};
      \draw (3, 0) node {\( u_{n} \)};
      \draw[<-] (1, 0) -- (2.5, 0);
      \draw (6, 1.75) node {\( w \)};
      \draw [decorate,decoration={brace,mirror,amplitude=10pt},xshift=-4pt,yshift=0pt]
      (3.5,-0.5) -- (3.5,4.0);
      \draw[<-] (4, 1.75) -- (5.7, 1.75) node[pos=0.5, anchor=south] {\( \phi_{2}^{-1} \)};
    \end{tikzpicture}
    \caption{Structure of the preimage \( \psi^{-1}(w) \) of \( w \)}
    \label{fig:preimage-of-w}
  \end{figure}
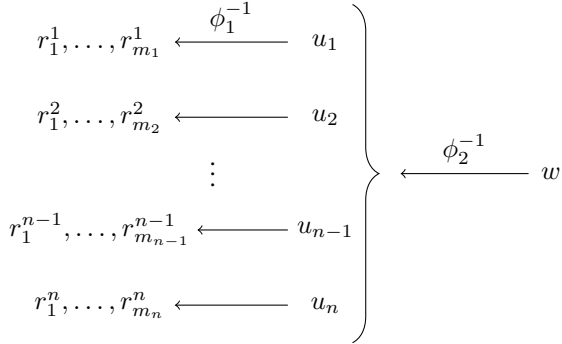

  The first observation is that for any realization \( x \in \Xi \), the value
  \( w(x) \) depends only on the numbers \( r_{j}^{i}(x) \).  Indeed, since
  \( \phi_{1} \) is aggregative, for each \( i \) the value \( u_{i}(x) \)
  depends only on \( \bigl(r_{j}^{i}(x)\bigr)_{1 \le j \le m_{i}} \), and by a
  similar token \( w(x) \) is uniquely reconstructible from \( u_{i}(x) \).
  More precisely, suppose
  \( \alpha_{i} : \bigl(\mathbb{R}^{+}\bigr)^{m_{i}} \to \mathbb{R}^{+} \) are
  the maps witnessing that \( \phi_{1} \) is aggregative, and
  \( \beta : \bigl(\mathbb{R}^{+}\bigr)^{n} \to \mathbb{R}^{+} \) is the corresponding map for
  \( \phi_{2} \).  If \( t = \sum_{i = 1}^{n}m_{i} \), then the map
  \( \gamma : \bigl(\mathbb{R}^{+}\bigr)^{t} \to \mathbb{R}^{+}\) given by
  \begin{multline*}
    \gamma(c_{1}, \ldots, c_{t}) \\
    = \beta\bigl(\alpha_{1}(c_{1}, \ldots, c_{m_{1}}),
    \alpha_{2}(c_{m_{1}+1}, \ldots, c_{m_{1} + m_{2}}), \ldots, \\
    \alpha_{n}(c_{t - m_{n}+1}, \ldots, c_{t})\bigr),
  \end{multline*}
  satisfies
  \begin{multline*}
    \gamma\bigl(r_{1}^{1}(x), \ldots, r_{m_{1}}^{1}(x), \ldots, r_{1}^{n}(x),
    \ldots, r_{m_{n}}^{n}(x) \bigr)\\ = \beta\bigl(u_{1}(x), \ldots,
    u_{n}(x)\bigr) = w(x)
  \end{multline*}
  for all \(x \in \Xi \).

  We need to show that \( \gamma \) is an embedding of partial orders, and to
  this end let \( x, y \in \Xi \) be realizations such that
  \( r_{j}^{i}(x) \le r_{j}^{i}(y) \) for all \( i, j \). We need to show
  that \( w(x) \le w(y) \). Note that since \( \alpha_{i} \)'s are embeddings,
  \begin{multline*}
    u_{i}(x) = \alpha_{i}\bigl(r_{1}^{i}(x), \ldots, r_{m_{i}}^{i}(x)\bigr) \le \\
    \alpha_{i}\bigl(r_{1}^{i}(y), \ldots, r_{m_{i}}^{i}(y)\bigr) = u_{i}(y).
  \end{multline*}

  Also, since \( \beta \) is an embedding, this implies that
  \begin{multline*}
    w(x) = \beta\bigl(u_{1}(x), \ldots, u_{n}(x)\bigr) \le \\
    \beta\bigl(u_{1}(y), \ldots, u_{n}(y)\bigr) = w(y),
  \end{multline*}
  and hence \( w(x) \le w(y) \) as claimed.

  Finally, if moreover \( r_{j}^{i}(x) < r_{j}^{i}(y) \) for some \( i, j \),
  then \( u_{i}(x) < u_{i}(y) \), and therefore also \( w(x) < w(y) \).  Thus
  \( \gamma \) is an embedding of partial orders, and therefore \( \psi \) is
  aggregative.
\end{proof}

We are now ready to introduce the key notion of an embedding between rulebooks.

\begin{defn}[Rulebook embedding]
  An \emph{embedding} between rulebooks is an aggregative map \( \phi : \mathcal{R} \to
  \mathcal{R}' \) that is also an embedding between \( \mathcal{R} \) and \(
  \mathcal{R}' \) as partially preordered sets.
\end{defn}

\begin{lem}
  \label{lem:surjective-embeddings}
  Let \( \mathcal{R}_{1} \) and \( \mathcal{R}_{2} \) be rulebooks, and let
  \( \phi : \mathcal{R}_{1} \to \mathcal{R}_{2} \) be a surjective embedding.
  Let \( x, y \in \Xi \) be realizations such that \( x \les_{1} y \).  If
  \( r \in \mathcal{R}_{1} \) is such that \( r(y) \ne r(x) \), then there
  exists \( u' \in \mathcal{R}_{2} \), such that 
  \[ u' \ge \phi(r) \textrm{ and } u'(x) < u'(y). \]
\end{lem}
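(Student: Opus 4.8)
The plan is to climb a chain inside the target rulebook $\mathcal{R}_2$, mirroring the argument of Lemma~\ref{lem:transitivity}, while carrying along a rule of $\mathcal{R}_1$ that lies over the current node and exploiting the aggregation maps that witness that $\phi$ is aggregative.

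First I would reduce to a convenient starting rule. Applying Lemma~\ref{lem:transitivity} with $z := y$ --- legitimate since $x \les_1 y$ and $y \les_1 y$, and since $r(x) \ne r(y)$ --- yields a rule $r_0 \ges_1 r$ with $r_0(x) < r_0(y)$. Because $\phi$ is an embedding of preordered sets, $\phi(r_0) \ges_2 \phi(r)$, so it is enough to produce $u' \ges_2 \phi(r_0)$ with $u'(x) < u'(y)$; from here on I work with $r_0$.

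Next I would construct a finite sequence of pairs $(s,u)$ with $s \in \mathcal{R}_1$, $u = \phi(s)$, $s(x) < s(y)$, and $u \ges_2 \phi(r_0)$, beginning with $(r_0, \phi(r_0))$. Given a current pair, I inspect $\phi^{-1}(u)$. If every $t \in \phi^{-1}(u)$ satisfies $t(x) \le t(y)$, then since $s \in \phi^{-1}(u)$ contributes a strict inequality, the tuple $\bigl(t(x)\bigr)_{t \in \phi^{-1}(u)}$ lies strictly below $\bigl(t(y)\bigr)_{t \in \phi^{-1}(u)}$ in the product order; as collapsing $\phi^{-1}(u)$ onto $u$ is admissible, the witnessing map is an embedding of partial orders, so $u(x) < u(y)$, and I set $u' := u$ and stop. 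Otherwise some $t \in \phi^{-1}(u)$ has $t(x) > t(y)$, and $x \les_1 y$ (Definition~\ref{def:preorder-on-trajectories}) furnishes $s' >_1 t$ with $s'(x) < s'(y)$; since $\phi$ is an embedding and $\phi(t) = u$, the node $\phi(s')$ satisfies $\phi(s') >_2 u$, and transitivity of $\les_2$ preserves $\phi(s') \ges_2 \phi(r_0)$, so $(s', \phi(s'))$ is a valid next pair, lying strictly above the previous one in $\mathcal{R}_2$.

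The crux is termination, handled exactly as in Lemma~\ref{lem:transitivity}: every time the stopping condition fails we pass to a node strictly $<_2$-above the previous one, so the nodes form a strictly increasing chain in the finite preordered set $\mathcal{R}_2$, whose members lie in pairwise distinct equivalence classes and hence are finite in number; after finitely many steps the stopping condition must hold, yielding the desired $u'$. The only genuine subtlety --- and the reason the conclusion cannot simply be $u' = \phi(r_0)$ --- is that $\phi(r_0)$ may aggregate $r_0$ with other rules of $\mathcal{R}_1$ whose values at $x$ exceed their values at $y$, in which case the embedding property of the aggregation map gives no information about $\phi(r_0)(x)$ versus $\phi(r_0)(y)$; the second branch above is precisely the mechanism that steps past such obstructions, and finiteness of $\mathcal{R}_2$ guarantees the process halts.
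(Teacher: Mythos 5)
Your proposal is correct and follows essentially the same route as the paper's proof: obtain a starting rule $r_0\ge r$ with $r_0(x)<r_0(y)$, then climb a strictly increasing chain in $\mathcal{R}_2$ by alternately using the aggregative (product-order embedding) property of $\phi$ to extract a preimage rule preferring $y$ and the definition of $x\les_1 y$ to override it, terminating by finiteness. The only cosmetic difference is that you invoke Lemma~\ref{lem:transitivity} with $z=y$ for the initial reduction where the paper argues directly from the definition of $\les_1$; both are valid.
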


\begin{proof}
  If \( r(y) < r(x) \), then there exists \( r_{1} > r \) such that
  \( r_{1}(x) < r_{1}(y) \).  If \( r(x) < r(y) \) to begin with, then we set
  \( r_{1} = r \).  In either case we have \( r_{1} \ge r \) and
  \( r_{1}(x) < r_{1}(y) \).  Set \( u_{1} = \phi(r_{1}) \).  We are done if
  \( u_{1}(x) < u_{1}(y) \).  Otherwise, let
  \( \{r_{1}^{1}, \ldots, r_{m_{1}}^{1}\} = \phi^{-1}(u_{1}) \) be the preimage
  of \( u_{1} \) (note that \( r_{1} \) is one of these elements)

  Since \( \phi \) is aggregative and since \( r_{1}(x) < r_{1}(y) \), there has
  to be some \( 1 \le i \le m_{1} \) such that
  \( r_{i}^{1}(y) < r_{i}^{1}(x) \).  Indeed, if \( r_{j}^{1}(x) \le
  r_{j}^{1}(y) \) for all \( j \), then 
  \[ \bigl(r_{1}^{1}(x), \ldots, r_{m_{1}}^{1}(x)\bigr) < \bigl(r_{1}^{1}(y),
    \ldots, r_{m_{1}}^{1}(y)\bigr) \]
  in the product order.  Hence, \( \phi \) being aggregative implies \( u_{1}(x)
  < u_{1}(y)\) contradicting our earlier assumption.  Thus \( r_{i}^{1}(y) <
  r_{i}^{1}(x) \) for some \( i \).

  In view of \( x \les_{1} y \), there exist \( r_{2} > r_{i}^{1} \) such that
  \( r_{2}(x) < r_{2}(y) \). Set \( u_{2} = \phi(r_{2}) \). Note that
  \[ r_{i}^{1} < r_{2} \implies \phi(r_{i}^{1}) < \phi(r_{2}) \iff u_{1} < u_{2}. \]
  We are done if \( u_{2}(x) < u_{2}(y) \).

  Suppose that \( u_{2}(x) \ge u_{2}(y) \) and let
  \[ \bigl\{r_{1}^{2}, \ldots, r_{m_{2}}^{2}\bigr\} = \phi^{-1}(u_{2}). \]
  By the same argument as above, there must exist some \( 1 \le i \le m_{2} \)
  such that \( r_{i}^{2}(x) > r_{i}^{2}(y) \).  In view of \( x \les_{1} y \),
  there is \( r_{3} > r_{i}^{2} \) such that \( r_{3} (x) < r_{3}(y)\).  Set
  \( u_{3} = \phi(r_{3})\).

  \begin{figure}[htb]
    \centering
    \begin{tikzpicture}
      \draw (0, 3) node [anchor = east] {\( u_{k} \)};
      \draw (-0.3,2.65) node {\( \vdots \)};
      \draw (0, 2) node[anchor = east] {\( u_{3} \)};
      \draw[->] (-0.3,1.2) -- (-0.3,1.8);
      \draw (0, 1) node[anchor = east] {\( u_{2} \)};
      \draw[->] (-0.3,0.2) -- (-0.3,0.8);
      \draw (0, 0) node[anchor = east] {\( u_{1} \)};
      \draw (2,0) node {\( r_{1}^{1}, \ldots, r_{m_{1}}^{1} \)};
      \draw[<-] (0,0) -- (1,0) node[pos=0.5, anchor=north] {\( \phi \)};
      \draw (2,1) node {\( r_{1}^{2}, \ldots, r_{m_{2}}^{2} \)};
      \draw[<-] (0,1) -- (1,1);
      \draw (2,2) node {\( r_{1}^{3}, \ldots, r_{m_{3}}^{3} \)};
      \draw[<-] (0,2) -- (1,2);
      \draw (2,3) node {\( r_{1}^{k}, \ldots, r_{m_{k}}^{k} \)};
      \draw[<-] (0,3) -- (1,3);
    \end{tikzpicture}
    \caption{Construction of the chain }
    \label{fig:chain-construction}
  \end{figure}
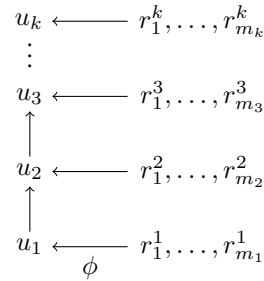

  The process continues, and builds a sequence of rules
  \( u_{1} < u_{2} < \cdots < u_{k} \) as in
  Figure~\ref{fig:chain-construction}.  By finiteness of the rulebook, the chain
  has to stop at some point, which is possible only if
  \( u_{k}(x) < u_{k}(y) \).  Since \( u_{k} > u_{1} \ge \phi(r) \),
  the lemma follows.
\end{proof}

\begin{thm}
  \label{thm:surjective-morphisms-preserve-order}
  Let \( \mathcal{R}_{1} \) and \( \mathcal{R}_{2} \) be rulebooks.  If there
  exists a surjective embedding \( \phi : \mathcal{R}_{1} \to \mathcal{R}_{2} \)
  between the two, then \( \les_{2} \) refines \( \les_{1} \).
\end{thm}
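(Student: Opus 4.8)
The plan is to verify directly the two conditions that, by the definition of ``refines'' recalled in Appendix~\ref{sec:orders-preorders}, are exactly what ``$\les_2$ refines $\les_1$'' means: that the identity map on $\Xi$ satisfies $x \les_1 y \implies x \les_2 y$ and $x <_1 y \implies x <_2 y$. Both implications should follow from Lemma~\ref{lem:surjective-embeddings} together with the aggregativity of $\phi$; no fresh chain-building induction is needed, since that delicate step has already been packaged into the lemma.

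For the first implication I would assume $x \les_1 y$ and take an arbitrary $u \in \mathcal{R}_2$ with $u(y) < u(x)$, aiming to produce $u' > u$ in $\mathcal{R}_2$ with $u'(x) < u'(y)$ (this is precisely what Definition~\ref{def:preorder-on-trajectories} asks for to conclude $x \les_2 y$). By surjectivity $\phi^{-1}(u)$ is a nonempty finite set $\{r_1,\ldots,r_k\}$, and aggregativity says $u(\cdot)$ factors through $x \mapsto (r_1(x),\ldots,r_k(x))$ via an embedding $\alpha$ of partial orders. If $r_i(x) \le r_i(y)$ held for all $i$, monotonicity of $\alpha$ would give $u(x) \le u(y)$, contradicting $u(y) < u(x)$; hence $r_i(y) < r_i(x)$ for some $i$. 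Applying Lemma~\ref{lem:surjective-embeddings} to this $r_i$ (using $x \les_1 y$ and $r_i(y) \ne r_i(x)$) yields $u' \ge \phi(r_i) = u$ with $u'(x) < u'(y)$; the case $u' = u$ is impossible since it would give $u(x) < u(y)$, so $u' > u$, as required.

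For the second implication I would assume $x <_1 y$. Then $x \les_1 y$, so the first part already gives $x \les_2 y$, and it only remains to exclude $y \les_2 x$. If $y \les_2 x$ held, then $x \sim_2 y$, and by the characterization of equivalent realizations established earlier (two realizations are equivalent iff every rule takes the same value on both), every rule of $\mathcal{R}_2$ would agree on $x$ and $y$. But $x <_1 y$ forces $x \not\sim_1 y$ (since $y \not\les_1 x$), so the same characterization applied to $\mathcal{R}_1$ gives a rule $r$ with $r(x) \ne r(y)$; Lemma~\ref{lem:surjective-embeddings} (again using $x \les_1 y$) then produces $u' \in \mathcal{R}_2$ with $u'(x) < u'(y)$, contradicting $u'(x) = u'(y)$. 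Hence $x \not\sim_2 y$, which together with $x \les_2 y$ yields $x <_2 y$.

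The only genuinely delicate point, and the place to be careful when writing this out, is bookkeeping of strict versus non-strict relations: aggregativity is what converts the strict inequality $u(y) < u(x)$ on $\mathcal{R}_2$ into a strict inequality on some preimage rule, and the conclusion $u' \ge \phi(r)$ of the lemma must be upgraded to $u' > u$ using the hypothesis $u(y) < u(x)$. Everything else is routine unwinding of the definitions of $\les$, of $\sim$, and of ``refines''.
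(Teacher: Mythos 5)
Your proposal is correct and follows essentially the same route as the paper's proof: both parts reduce to Lemma~\ref{lem:surjective-embeddings}, using aggregativity of $\phi$ to extract a preimage rule $r_i$ with $r_i(y) < r_i(x)$ for the non-strict part, and the equivalence characterization ($x \sim y$ iff all rules agree) for the strict part. Your explicit upgrade of $u' \ge u$ to $u' > u$ is a small point the paper leaves implicit, but the argument is the same.
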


\begin{proof}
  Suppose \( x, y \in \Xi \) are such that \( x \les_{1} y \), we show that \( x
  \les_{2} y \).  Pick some \( u \in \mathcal{R}_{2} \) such that \( u(y) < u(x)
  \).  Let \( \{r_{1}, \ldots, r_{m}\} = \phi^{-1}(u) \) be the preimage.  Note
  that \( r_{i}(x) > r_{i}(y) \) for some \( i \) (for otherwise \( u(x) \le
  u(y) \), because \( \phi \) is aggregative), hence Lemma
  \ref{lem:surjective-embeddings} applies and produces some \( u' \ge
  \phi(r_{i}) = u \) such
  that \( u'(x) < u'(y) \).  

  It remains to show that \( x <_{1} y \) implies \( x <_{2} y \).  Since
  \( x \les_{2} y \) has already been shown,
   % in view of \eqref{item:les-equivalence}, 
  it is enough to show that \( u(x) \ne u(y) \) for some
  \( u \in \mathcal{R}_{2} \).  Pick some \( r \in \mathcal{R}_{1} \) such that
  \( r(x) < r(y) \).  Lemma \ref{lem:surjective-embeddings} produces
  \( u \ge \phi(r) \) such that \( u(x) < u(y) \).
\end{proof}

Two examples of surjective embeddings are Priority
Refinements~(\prettyref{def:op-refine}) and Rule
Aggregation~(\prettyref{def:op-aggreg}).

\section{Adding new rules}
\label{sec:adding-new-rules}

There is one important operation that is missing from the picture --- addition
of new rules.  Surjective embeddings of rulebooks let us impose new relations
between existing rules, as well as to aggregate several rules into one.  What if
one would like to add a new rule that does not bear any direct relation to
existing ones?

Generally, this is a very destructive operation in the sense that it can
dramatically change the preorder imposed on realizations.  Perhaps the most
extreme example is when to a rulebook \( \mathcal{R}_1 \) a new rule \( r \) is
added that is declared to be of the highest importance: \( u < r \) for all
\( u \in \mathcal{R}_1 \).  Let \( \mathcal{R}_2 \) denote the resulting
rulebook \( \mathcal{R}_1 \cup \{ r \} \), and note that if \( x, y \) are two
realizations such that \( r(x) < r(y) \) then necessarily \( x <_2 y\)
regardless of how they were related in the preorder induced by
\( \mathcal{R}_1 \).

A similar but slightly more general case is when a new rule is added ``in the
middle'' of \( \mathcal{R}_1 \).  More formally, suppose that
\( \mathcal{R}_2 = \mathcal{R}_1 \cup \{ r \} \), where the new rule \( r \)
satisfies \( u < r\) for some \( u \in \mathcal{R}_1 \).  If \( x, y \in \Xi \)
are two realizations such that \( u'(x) = u'(y) \) for all
\( u' \in \mathcal{R}_1\setminus \{u\} \) and \( u(x) < u(y) \), then
\( x <_1 y \).  If, however, \( r(y) < r(x) \), then \( y <_2 x \), and the
order between the realizations is reverted.

Unless one makes some additional assumptions on the set of realizations
\( \Xi \), the example above shows that adding a rule above an existing one can
easily change the preorder on realizations.  However, in order to get some
meaningful preservation of the preorder, it is not enough to assume that the
newly added rules are not above any of the existing ones. Consider the simplest
case, when \( \mathcal{R}_1 \) consists of a single rule \( \{ u \} \), and
\( \mathcal{R}_2 = \{r, u\} \) adds a rule that is incomparable with \( r \).
If \( x \) and \( y \) are two realizations such that \( u(x) < u(y) \), then
necessarily \( x <_1 y \).  However, if \( r(y) < r(x) \), then \( x \) and
\( y \) are incomparable relative to \( \les_2 \).  When the two rules are
further aggregated as described in the previous appendix, all relations between
\( x \) and \( y \) become possible.

The example above can be modified slightly by considering a rulebook
\( \mathcal{R}_1 = \{u, u'\} \), \( u < u' \), and adding the rule \( r \) such
that \( r < u' \), but \( r \) and \( u \) are incomparable.  The same analysis
as above now applies to a pair of realizations such that \( u'(x) = u'(y) \).
In particular, for the relation \( \les \) to
be broken by adding a new rule, one does not have to add a completely independent
rule, it is enough to have some rules in \( \mathcal{R}_1 \) that are not
comparable to \( r \).

We are left with only one option --- add new rules below all of the existing
ones.  However, even this operation does not result in the refinement of the
\( \les \) order on realizations.  Indeed, the simplest case, is when
\( \mathcal{R}_1 = \{u\} \) and \( \mathcal{R}_2 = \{u, r\} \), \( r < u \).  If
\( x, y \) are two equivalent realizations, then necessarily \( x \les_1 y \)
and \( y \les_1 x \).  However, if the new rule \( r \) differentiates between
the realizations, \( r(x) \ne r(y) \), then one of \( x \les_2 y \),
\( y \les_2 x \) is false.

We conclude that in general we cannot guarantee that the relation \( \les \) has
been refined if any new rules were added.  The last example in the list above
is, nonetheless, different from others.  It turns out that the only problem that
can occur, when new rules are added below existing ones, is that equivalent
realizations are no longer equivalent in the enlarged rulebook.  Thus, while
the preorder \( \les \) may not be refined, its strict counterpart \( < \) is
preserved by such an operation.

\begin{defn}
  An embedding of rulebooks \( \phi: \mathcal{R}_1 \to \mathcal{R}_2 \) is said
  to be \textit{dominant}, if \( u < r \) for all
  \( u \in \mathcal{R}_2 \setminus \phi(\mathcal{R}_1) \) and
  \( r \in \phi(\mathcal{R}_1) \).
\end{defn}

\begin{thm}
  Let \( \phi : \mathcal{R}_1 \to \mathcal{R}_2 \) be a dominant embedding of
  rulebooks.  If \( x, y \) are realizations such that \( x <_1 y \), then also
  \( x <_2 y\).
\end{thm}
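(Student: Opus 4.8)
The plan is to split $\mathcal{R}_2$ into the image $\phi(\mathcal{R}_1)$, which by dominance sits on top, and the newly added rules $\mathcal{N} = \mathcal{R}_2 \setminus \phi(\mathcal{R}_1)$, which sit strictly below everything in $\phi(\mathcal{R}_1)$, and then to transfer $x <_1 y$ to $x <_2 y$ in two stages. First, I would equip the rule set $\phi(\mathcal{R}_1)$ with the preorder it inherits from $\rleq_2$; because a dominant embedding is in particular an aggregative order-embedding, the corestriction $\phi : \mathcal{R}_1 \to \phi(\mathcal{R}_1)$ is a \emph{surjective} embedding of rulebooks. Writing $\les'$ for the preorder the sub-rulebook $\phi(\mathcal{R}_1)$ induces on $\Xi$, Theorem~\ref{thm:surjective-morphisms-preserve-order} says $\les'$ refines $\les_1$; hence $x <_1 y$ gives $x \les' y$ while $y \les' x$ fails. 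In particular $x$ and $y$ are not $\les'$-equivalent, so by the characterization of equivalent realizations from Section~2 (two realizations are equivalent exactly when every rule agrees on them) there is a rule $r_0 \in \phi(\mathcal{R}_1)$ with $r_0(x) \neq r_0(y)$.

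Second, I would upgrade $x \les' y$ to $x \les_2 y$ by verifying Definition~\ref{def:preorder-on-trajectories} for the full rulebook. Let $r \in \mathcal{R}_2$ with $r(y) < r(x)$. If $r \in \phi(\mathcal{R}_1)$, then applying the definition of $\les'$ inside the sub-rulebook produces $r' \in \phi(\mathcal{R}_1)$ with $r' > r$ and $r'(x) < r'(y)$; since $\phi(\mathcal{R}_1)$ carries the restriction of $\rleq_2$, this $r' > r$ is a genuine $\rleq_2$-inequality, so $r'$ witnesses the required clause. If $r \in \mathcal{N}$, I would first exhibit some $r' \in \phi(\mathcal{R}_1)$ with $r'(x) < r'(y)$: if $r_0(x) < r_0(y)$ take $r' = r_0$, and if instead $r_0(y) < r_0(x)$ apply $x \les' y$ to $r_0$ to obtain such an $r' > r_0$ in $\phi(\mathcal{R}_1)$. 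Dominance then gives $r < r'$ in $\mathcal{R}_2$, so once more $r'$ witnesses the clause. This proves $x \les_2 y$. Finally I would exclude $y \les_2 x$: were it to hold, $x$ and $y$ would be $\les_2$-equivalent, hence $r(x) = r(y)$ for every $r \in \mathcal{R}_2$ by the same Section~2 characterization, contradicting $r_0(x) \neq r_0(y)$ since $r_0 \in \phi(\mathcal{R}_1) \subseteq \mathcal{R}_2$. Combining the two halves, $x <_2 y$.

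I expect the crux to be the $r \in \mathcal{N}$ case of the second paragraph: a freshly added, low-priority rule that happens to prefer $y$ must be overridden, and the only possible overriders are the rules of $\phi(\mathcal{R}_1)$, so one needs both that $x <_1 y$ leaves some original rule strictly separating $x$ and $y$ and that, if that rule prefers $y$, a single chaining step through $x \les' y$ reaches an original rule preferring $x$ --- this is exactly where dominance enters in an essential way. The one piece of bookkeeping to be careful about is keeping the preorder on the rule set $\phi(\mathcal{R}_1)$ equal to the restriction of $\rleq_2$, so that strict inequalities obtained within the sub-rulebook stay strict in $\mathcal{R}_2$; this is legitimate precisely because $\phi$ is an order embedding.
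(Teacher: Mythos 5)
Your proposal is correct and follows essentially the same route as the paper's own proof: pass to the image sub-rulebook via the surjective-embedding theorem, handle rules in $\phi(\mathcal{R}_1)$ using the induced order there, and override any new rule in $\mathcal{R}_2\setminus\phi(\mathcal{R}_1)$ by dominance together with an image rule strictly preferring $x$. You are in fact slightly more careful than the paper in two spots it leaves implicit --- extracting the witness $r_0$ with $r_0(x)\neq r_0(y)$ (and chaining once through $\les'$ if it points the wrong way), and explicitly ruling out $y \les_2 x$ to get strictness.
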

\begin{proof}
  Consider \( \phi(\mathcal{R}_1) \) as a rulebook, and note that the map
  \( \phi : \mathcal{R}_1 \to \phi(\mathcal{R}_1) \) is automatically
  surjective.  Theorem~\ref{thm:surjective-morphisms-preserve-order} applies,
  and shows that \( x < y \) relative to \( \phi(\mathcal{R}_1) \) as well.
  This allows us to assume without loss of generality that
  \( \mathcal{R}_1 \subseteq \mathcal{R}_2 \), and the map \( \phi \) is the
  identity map.  Since \( \phi \) is assumed to be dominant, it means that
  \( u < r\) for all \( r \in \mathcal{R}_1 \) and
  \( u \in \mathcal{R}_2 \setminus \mathcal{R}_1 \).

  Suppose \( x, y \) are two realizations such that \( x <_1 y \), and let
  \( r \in \mathcal{R}_2 \) be such that \( r(y) < r(x) \).  We need to show
  that there exists some rule \( r' > r \) such that \( r'(x) < r'(y) \).
  Indeed, if \( r \in \mathcal{R}_1 \), then such a rule \( r' \) must exist
  simply because \( x <_1 y \) by assumption.  So, let us assume that
  \( r \in \mathcal{R}_2 \setminus \mathcal{R}_1 \).  Since \( x <_1 y \) there
  mush be at least one rule \( u \in \mathcal{R}_1 \) such that
  \( u(x) < u(y) \).  Since \( u > r \), the theorem follows.
\end{proof}

Rule Augmentation as described in \prettyref{def:op-augment} is an example
of a dominant embedding.

\end{document}